\documentclass[twoside]{article}

\pdfoutput=1

%
\usepackage[accepted]{aistats2019}
%


\usepackage{natbib}



\usepackage[utf8]{inputenc} 
\usepackage[T1]{fontenc}    
\usepackage{hyperref}       
\usepackage{cleveref}
\usepackage{url}            
\usepackage{booktabs}       
\usepackage{amsfonts}       
\usepackage{nicefrac}       
\usepackage{microtype}      
\usepackage{enumitem}
\usepackage{wrapfig}
\usepackage{makecell}

\usepackage{siunitx}
\usepackage[version=4]{mhchem}
\usepackage{collcell}
\sisetup{separate-uncertainty}



\usepackage{amsmath}
\usepackage{amssymb}
\usepackage{amsthm}
\usepackage{amsfonts}

\newcommand{\cmark}{{\color{teal}\checkmark}}
\newcommand{\xmark}{{\color{red} \boldsymbol{\times}}}

\usepackage{multirow}

\usepackage{tikz}


\usepackage{thmtools}
\usepackage{thm-restate}

\declaretheorem[name=Theorem]{thm}

\newtheorem{example}[thm]{Example}
\newtheorem{corollary}[thm]{Corollary}

\newtheorem{remark}[thm]{Remark}
\newtheorem{lemma}[thm]{Lemma}

\usepackage{caption,subcaption}

\newcommand{\customlabel}[2]{%
\protected@write \@auxout {}{\string \newlabel {#1}{{#2}{}}}}
\makeatother

\usepackage[textsize=footnotesize,color=green!40]{todonotes}




\newcommand{\vct}{ }

\newcommand{\argmin}{\mathop{\mathrm{argmin}}}
\newcommand{\argmax}{\mathop{\mathrm{argmax}}}

\def\E{\mathbb{E}}
\def\P{\mathbb{P}}

\def\cA{\mathcal{A}}

\def\cD{\mathcal{D}}

\def\cX{\mathcal{X}}

\newenvironment{talign}
 {\align}
 {\endalign}
\newenvironment{talign*}
 {\csname align*\endcsname}
 {\endalign}

\begin{document}

%
\runningtitle{Imitation-Regularized Offline Learning}

%

\twocolumn[

\aistatstitle{Imitation-Regularized Offline Learning}

\aistatsauthor{ Yifei Ma \And Yu-Xiang Wang \And Balakrishnan (Murali) Narayanaswamy }

\aistatsaddress{ yifeim@amazon.com \And  yuxiangw@cs.ucsb.edu\footnotemark \And muralibn@amazon.com } ]

\begin{abstract}
We study the problem of offline learning in automated decision systems under the contextual bandits model. We are given logged historical data consisting of contexts, (randomized) actions, and (nonnegative) rewards.
A common goal is to evaluate what would happen if different actions were taken in the same contexts, so as to optimize the action policies accordingly. The typical approach to this problem, inverse probability weighted estimation (IPWE) \citep{bottou2013counterfactual}, requires logged action probabilities, which may be missing in practice due to engineering complications. Even when available, small action probabilities cause large uncertainty in IPWE, rendering the corresponding results insignificant. To solve both problems, we show how one can use policy improvement (PIL) objectives, regularized by policy imitation (IML). We motivate and analyze PIL as an extension to Clipped-IPWE, by showing that both are lower-bound surrogates to the vanilla IPWE.
We also formally connect IML to IPWE variance estimation \citep{swaminathan2015counterfactual} and natural policy gradients.
Without probability logging, our PIL-IML interpretations justify and improve, by reward-weighting, the state-of-art cross-entropy (CE) loss that predicts the action items among all action candidates available in the same contexts.
With probability logging, our main theoretical contribution connects IML-underfitting to the existence of either confounding variables or model misspecification.
We show the value and accuracy of our insights by simulations based on Simpson's paradox, standard UCI multiclass-to-bandit conversions and on the Criteo counterfactual analysis challenge dataset.
\end{abstract}

\section{Introduction}
\label{sec:introduction}

There are two types of offline learning approaches in automated decision systems (e.g. recommendation systems): Q-learning and policy learning. Q-learning uses reward-modeling (or supervised learning) to predict rewards from both the context features and the action features. Formally, we estimate $Q(a,x)$, the expected reward from taking an action $a$ in context $x$; decisions are then implied by the \emph{greedy policy} that selects actions with the highest expected reward in each decision context \citep{juan2016field,rendle2012factorization,hidasi2015session,munos2016safe}.
Reward modeling suffers from biases due to unobserved confounding variables or model 
mis-specification. For example, items that are temporarily popular because of sales events may not be popular in general, but these sales can confuse the learning system by reinforcing any mistakes in the previous policies when they are mistaken as the causes of successful sales.
Therefore, it is often desirable to build reward-model-free decision systems that directly estimate the \emph{causal effects} of the candidate actions, robust to hidden biases in previous logging policies.

%
%


    

\footnotetext{Most of the work done while at Amazon.}

As a result, many decision systems use policy learning \citep{bottou2013counterfactual,swaminathan2015counterfactual,lefortier2016large,joachims2018deep,austin2011introduction,dudik2011doubly,horvitz1952generalization}.
To directly optimize for the decision policy in the presence of confounders, one additional requirement is to have randomization in the logging process: every candidate action must have nonzero probability to be selected given any context.
By logging these action probabilities, unbiased causal effects can be estimated via inverse probability weighted estimator (IPWE), which up- or down-weights the rewards according to the odds of choosing the same action in the same context, across the two policies.

Unfortunately, accurate probability logging is a significant practical challenge.
More worryingly, even with probability logging, naive IPWE suffers from large variance in estimation of causal effects, due to the up-weighting of rare actions, some of which will, on balance, appear in the logged datasets at least once. For example, consider a logging policy with a $1\%$ chance of sampling a rare action. The rare action will be included in a dataset of a hundred samples at least once with probability $1-(1-1\%)^{100}=63\%$. When this happens, IPWE weighs the single item as much as $100$ examples-- equivalent to half of the dataset. This increases the variance of any estimates that depend on that example significantly
(See Example~\ref{eg:epsilon-greedy} in the appendix for more details and Figure~\ref{fig:IPWE-weight} for a real-world example).
Since this problem is caused by rare actions, one solution is to use biased estimators that conservatively estimate any potential lifts after up-weighting, \citep{bottou2013counterfactual,swaminathan2015counterfactual,schulman2017proximal,schulman2015trust}, which we show corresponds to estimating a lower bound on the eventual policy improvements.

\begin{table}[t]
    \centering
    \caption{Challenges tackled by different objectives}
    \label{tab:challenges}
    \begin{tabular}{lcccc}\toprule
    challenge & Q & IPWE & IML & PIL-IML \\\midrule
    confounders & $\xmark$ & $\cmark$ & $\cmark$ & $\cmark$ \\
    small/no probs & $\cmark$ & $\xmark$ & $\cmark$ & $\cmark$ \\
    improvement & $\cmark$ & $\cmark$ & $\xmark$ & $\cmark$ \\\bottomrule
    \end{tabular}
\end{table}

\begin{figure}[t]
    \centering
    \includegraphics[width=0.9\linewidth]{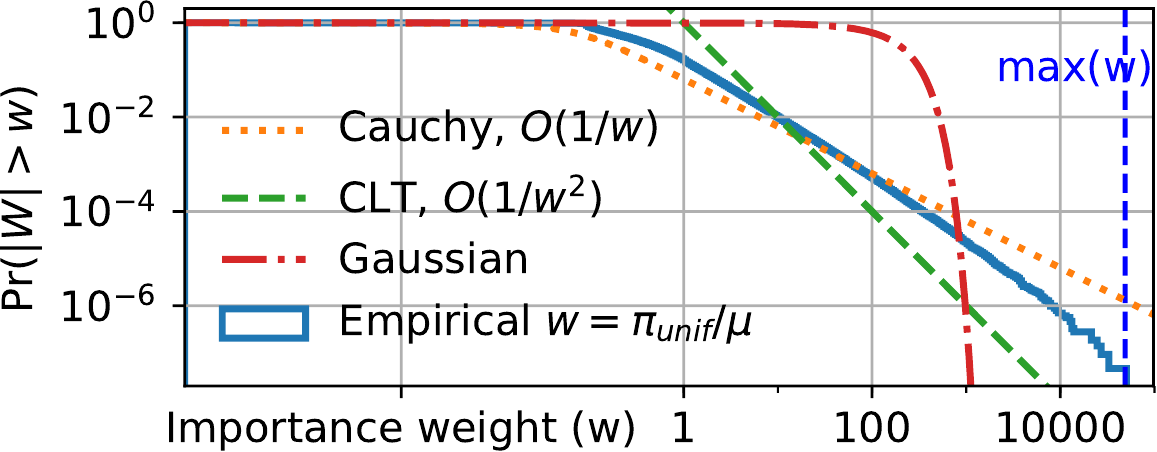}
    \caption{The importance weight distribution in Criteo counterfactual-analysis dataset \citep{lefortier2016large} has unbounded variance due to its slower-than-central-limit-theorem (CLT) tail; $\max(w)=\num{49000}$ in a total of 21MM examples.
    }
    \label{fig:IPWE-weight}
    \vspace{-1em}
\end{figure}

In this paper, we show a connection between policy improvement lower bounds (PIL) and Clipped-IPWE \citep{bottou2013counterfactual} (Theorem~\ref{thm:gap}).
This connection opens up a number of extensions to Clipped-IPWE, and we focus on one in particular - the log-transformed-IPWE. We analyze this estimator, and further establish connections to policy gradients (PG) \citep{sutton2000policy} using log-separability and Taylor approximations. In essence, we show that PG for contextual settings is equivalent to the cross-entropy (CE) objective in multi-class classification, where the label is whichever action that leads to the largest positive rewards in a particular context (Eq. \ref{eq:ce-pg}). Since PG/CE does not require logged action probabilities, this provides a justification for their success even when logging is biased, particularly when compared with other offline learning objectives, e.g. Bayesian personalized ranking, sigmoid or triplet losses.

Once we identify PG/CE as an approximation we propose and analyze policy imitation learning (IML) as a regularizer \eqref{eq:iml-kl}, and show that this improves the tightness of estimates (Theorem \ref{thm:iml-and-ipwe-variance}). We connect IML to IPWE variance estimation \citep{swaminathan2015counterfactual}. In our experiments we see that IML is superior because it does not rely on unstable IPWE mean estimates, which is required for direct variance estimation (Section \ref{sec:criteo}).
We also connect IML to natural policy gradients \citep{kakade2002natural,schulman2017proximal,schulman2015trust,munos2016safe} without requiring knowledge of the model families.
Similar to PG/CE, IML also works without logged action probabilities.
The combined PIL-IML objective predicts the best next action that is ever taken in the logged data, with a weight that is large for very positive rewards and small but still positive for less-good rewards.

Finally, we show that when we have logged action probabilities, we can still benefit from IML by using it to diagnose a common problem in offline learning - when the logging policy is not in the class of optimization policies under consideration when learning. We show that IML-underfitting implies that the learning policy class does not have enough complexity or sufficient decision variables to imitate the original policy, which may lead to model biases.
On the other hand, IML-underfitting can be used to our advantage by pointing out where we should collect additional data, through better action explorations (Theorem~\ref{thm:entropy}).

Notice, IML is different from propensity fitting, which is used as a plug-in replacement for the logging probabilities in the denominators of IPWE \citep{ dudik2011doubly,strehl2010learning}.
On the other hand, we extend our methods to doubly robust approaches \citep{robins1994estimation,robins1995semiparametric,bang2005doubly,jiang2015doubly,dudik2011doubly} and switching approaches \citep{wang2016optimal,kang2007demystifying,thomas2016data} for additional, free variance reduction.

\section{Offline Learning Objectives}

While many methods have been proposed for learning from logged data, it is often unclear what the objective being maximized or minimized by different approaches. We introduce some clarifying definitions here. Let $(\vct x_1,\vct a_1,\mu_1,r_1), ..., (\vct x_n,\vct a_n,\mu_n,r_n)\in \cX\times \cA\times \mathbb{R} \times \mathbb{R}$ be a dataset containing $n$ sample points of context, action, (possibly missing) probability of action given context, and reward, collected while running an automated decision system under a \emph{logging policy}. Both contexts and actions have  feature representations. For tabular actions, we use their indicator vectors, $\vct a=\vct e_a$.

Specifically, the data are generated i.i.d. as:
$
x_i,h_i\sim P(x,h),
a_i\sim \mu(a|x_i,h_i) \text{ supported on } \cA(x_i,h_i) \subset \cA,
\mu_i = \mu(a|x_i,h_i),
r_i\sim  P(r|x_i,h_i,a_i)
$
Here, $h_i$ is an unobserved confounding variable that affects both the action $a_i$ and the reward $r_i$. 
$\cA(x_i,h_i)$ is discrete set of candidate actions available in context $(x_{i},h_{i})$.
We can assume that $\cA(x_i,h_i)$ is logged for every $i = [n]$. 
Note that most existing work implicitly assume that $\mu(a|x_i) = \mu(a|x_i,h_i)$, but it is common that certain decision variables are not logged, especially in publicly available data sets, due to proprietary features or human operators overwriting decisions every once in a while. In general, we do not assume that we know the analytic form of $\mu$ besides having logged $\mu_i$ for the specific action taken. We will also consider the setting when even $\mu_i$ is not known, which makes it fundamentally impossible to do consistent off-policy evaluation (due to confounders), but we will show that often we can still do off-policy learning and adapt to the unknown propensities.
To the best of our knowledge, this is the first time that a result of such flavor is presented.

The task of \emph{offline learning} is to come up with a new policy, which is a distribution over candidate actions given context, $\pi(\vct a\mid\vct x),\forall \vct a\in\cA(\vct x)$, such that the expected reward 
under $\pi$:
\begin{equation}
    \mathbb{E}_{\pi}r
    = 
\mathbb{E}
\sum\nolimits_{\vct a\in\mathcal{A}(\vct x)} \pi(\vct a\mid\vct x) r(\vct x,\vct a).
\label{eq:objective}
\end{equation}
is as large as possible. This is difficult because it aims to estimate rewards for actions that may not have been logged in a particular context,
unless they happen to coincide with the randomized action choices.

\textbf{Inverse-probability weighted estimation} (IPWE) \citep{horvitz1952generalization,bottou2013counterfactual} is an unbiased offline evaluation method, which uses importance weights to estimate expectations under any new policy with samples generated from the original logging policy,
\begin{align}
	&\mathbb{E}_\pi r
    =\mathbb{E} \sum\nolimits_{\vct a\in\mathcal{A}(\vct x)}
    \pi(\vct a\mid\vct x)
    r(\vct x,\vct a)
    \label{eq:IPWE}
    \\
    &=\mathbb{E}
    \sum\nolimits_{\vct a\in\mathcal{A}(\vct x)}
    \mu(\vct a\mid\vct x)
    \biggl[
   \frac{\pi(\vct a\mid\vct x)}{\mu(\vct a\mid\vct x)}
    r(\vct x,\vct a)
    \biggr]
   = \mathbb{E}_{\mu}
   \biggl[
    \frac{\pi}{\mu}
  r
  \biggr],
  \nonumber
\end{align}
where the last expectation is over the logging policy and can be estimated (without bias) by its sample mean.
Define $w=\frac{\pi}{\mu}$ and $w_i=\frac{\pi(\vct a_i\mid\vct x_i)}{\mu_i}$ to be the importance weight in function form and instance form, respectively.
Empirically, unbiased policy improvement can be maximized by
\begin{equation}
	\max_{\pi} \Delta {\rm IPWE}(\pi) = 
    \frac{1}{n}\sum\nolimits_{i=1}^n (w_i-1) r_i.
    \label{eq:ipwe}
\end{equation}

\textbf{The variance of policy improvements} is:
\begin{equation}
    \mathbb{V}(\Delta {\rm IPWE}(\pi))
    = \frac{1}{n}\mathbb{V}_\mu((w-1)r),
\end{equation}
which can similarly be estimated from the logged data.

While IPWE does not model the reward function and thus avoids modeling biases,
it depends on randomized sampling of actions, which are usually the result of exploration/exploitation trade-offs.
Lack of sufficient exploration, very common in practice, may lead to a large variance in the estimate, because data points with small action probabilities have large weights.

Any \textbf{objective} then must consider the trade-off between bias and variance. Our objective, which is often reasonable from a practical perspective, is to reliably maximize the policy improvement by a significant margin over the logging policy.

\section{Proposed methods}

To solve the large variance in IPWE, we propose to maximize a policy improvement lower-bound (PIL) regularized by policy imitation learning (IML). Assuming that the rewards are nonnegative, the general form is:
\begin{equation}
    \max_{\pi} {\rm PIL}(r,\pi) + \epsilon {\rm IML}(\pi),
    \label{eq:pil-iml}
\end{equation}
where $\epsilon$ is a tuning parameter
to trade-off exploration/exploitation.
One notable special case of the objective resembles a reward-weighted cross-entropy (CE) objective, written as:
\begin{equation}
    \argmin_\pi \sum\nolimits_{i=1}^n
    \Bigl[
    r_i\log\frac{1}{\pi(a_i\mid x_i)} + \epsilon\log\frac{1}{\pi(a_i\mid x_i)}
    \Bigr].
    \label{eq:ce}
\end{equation}
We thus see the use of CE loss in offline learning as the result of a particular choice in the trade-offs of bias and variance (and ease of generalization and optimization).

\subsection{Policy improvement lower-bounds (PILs)}

\begin{wrapfigure}{r}{0.5\linewidth}
    \centering
    \vspace{-1em}
    \begin{tikzpicture}[domain=-0.0:3.5, scale=0.6]
    \draw[very thin,color=gray] (-0.1,-0.1) grid (2.8,2.8);
    \draw[->] (-0.2,0) -- (3.5,0) node[right] {$w=\frac{\pi}{\mu}$};
    \draw[->] (0,-0.2) -- (0,3.2) node[above] {$\bar w$};
    
    \draw[color=blue, thick, domain=0:1] plot(\x, \x);
    \draw[color=blue, very thick, dotted, domain=0.3:1] plot(\x,{ln(\x)+1});
    \draw[color=blue, thick, domain=1:3.5] plot(\x,{ln(\x)+1})
        node[right] {PIL};
    \draw[color=orange, dashed, very thick, domain=2.8:3.5] plot(\x,{2.8})
        node[right] {IPWE$_\tau$};
    \draw[color=red, dotted, very thick] plot (\x,\x)
        node[right] {IPWE};

        
        
        
    \end{tikzpicture}
    \vspace{-1em}
    \caption{PIL and IPWE}
    \label{fig:lower-bounds}
\end{wrapfigure}
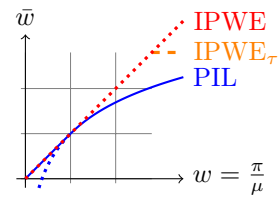

One way to reduce the large IPWE variance is to clip its large weights by replacing $w$ with $\bar{w}_\tau=\min(w, \tau)$ for a reasonable threshold $\tau>0$. Here, $\tau$ is a bias-variance trade-off parameter.
Fixing $\tau$, Clipped-IPWE is
\begin{equation}
    \max_\pi \Delta {\rm IPWE}_\tau(\pi) = \frac{1}{n}\sum\nolimits_{i=1}^n(\bar{w}_{\tau,i}-1)r_i. \label{eq:ipwe-tau}
\end{equation}

Instead, in offline learning, we take a different perspective on IPWE$_\tau$.
Assuming that the rewards are nonnegative, $\bar{w}_\tau\leq w$ lets IPWE$_\tau$ to be always a lower-bound on IPWE.
Maximizing lower bounds as a surrogate objective is common practice. Generalizing this observation, we can arrive at many extensions of IPWE. One that we focus on is what we call the policy improvement lower bound estimator (PIL), which is based on the inequality $\log(w)\leq w-1, \forall w>0$ (Figure~\ref{fig:lower-bounds}). Depending on the logging scenario - with or without 
probabilities $\mu$, we define the following objectives:
\begin{talign}
    {\rm PIL}_{\mu}(\pi)
    &= \frac{1}{n}\sum_{i=1}^n r_i 
    \log w_i1_{\{w_i\geq1\}} + (w_i-1)1_{\{w_i<1\}}
    \nonumber
    \\
    {\rm PIL}_{\emptyset}(\pi)
    &= \frac{1}{n}\sum_{i=1}^n r_i \log w_i.
\label{eq:pil}
\end{talign}

Without the logged probabilities $\mu$, we can also approximate IPWE up to a constant value. Consider the cross-entropy loss and its minimization,
\begin{equation}
    \argmin_\pi {\rm CE}(\pi; r) = -\frac{1}{n}\sum\nolimits_{i=1}^n r_i\log\pi(a_i\mid x_i).
    \label{eq:ce-pg}
\end{equation}
We see that, up to log-separable constant terms in the parameters of the logging policy, the PIL objective is equivalent to minimizing the reward-weighted cross-entropy loss for next-action predictions. That is ${\rm CE}(\pi;r) = {\rm CE}(\mu;r) - {\rm PIL}_{\emptyset}(\pi)$.
As a result, CE may not require logged action probabilities yet enjoys the additional causality justification than other offline learning objectives, such as Bayesian personalized ranking and triplet loss. In particular, the other objectives are more likely to be biased by the logging policy and their negative sampling processes.

All these approximations of IPWE, come with an important indicator of the biases they induce - violations of the self-normalizing property. The self-normalizing property is that $\mathbb{E}_\mu(w) = \mathbb{E}(\pi-\mu)=1-1=0$.
When $w$ is replaced with $\bar{w}$, the violation is empirically
\begin{equation}
  {\rm Gap} = \frac{1}{n}\sum\nolimits_{i=1}^n (1-\bar{w}_i),
  \label{eq:gap}
\end{equation}
where $\bar{w}$ generalizes to any valid lower-bound surrogates. The theorem below shows the relationship between this observable quantity and the unobserved sub-optimality due to the use of a surrogate objective.
\begin{restatable}[Probability gap]{thm}{thmGap}
    \label{thm:gap}
    For any $\bar w\leq w$, assuming $0\leq r\leq R$,
    the approximation gap can be bounded by the probability gap, in expectation:
    \begin{equation}
        0
        \leq
        \mathbb{E}_\mu[(w-\bar{w})r]
        \leq
        \mathbb{E}_\mu {[\rm Gap]} R.
    \end{equation}
\end{restatable}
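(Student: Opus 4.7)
The plan is to prove the two inequalities separately, both via elementary monotonicity arguments, and then close the loop by invoking the self-normalizing property $\mathbb{E}_\mu[w]=1$ to rewrite the upper bound in the claimed form.

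For the lower bound, the assumption $\bar{w}\leq w$ gives $w-\bar{w}\geq 0$ pointwise, and the nonnegativity of $r$ then yields $(w-\bar{w})r\geq 0$ pointwise. Taking expectations under $\mu$ preserves the inequality, so $\mathbb{E}_\mu[(w-\bar{w})r]\geq 0$. This step is immediate.

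For the upper bound, I would use the pointwise bound $r\leq R$ together with the nonnegativity of $w-\bar{w}$ to obtain $(w-\bar{w})r\leq R(w-\bar{w})$ pointwise; taking $\mathbb{E}_\mu$ on both sides gives $\mathbb{E}_\mu[(w-\bar{w})r]\leq R\,\mathbb{E}_\mu[w-\bar{w}]$. The remaining task is to identify $\mathbb{E}_\mu[w-\bar{w}]$ with $\mathbb{E}_\mu[\mathrm{Gap}]$, where $\mathrm{Gap}=\frac{1}{n}\sum_{i=1}^n(1-\bar{w}_i)$, so that in expectation $\mathbb{E}_\mu[\mathrm{Gap}]=\mathbb{E}_\mu[1-\bar{w}]=1-\mathbb{E}_\mu[\bar{w}]$. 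Here I invoke the self-normalizing property discussed just before the theorem statement, namely
\begin{equation*}
\mathbb{E}_\mu[w]=\mathbb{E}_\mu\!\left[\tfrac{\pi(a\mid x)}{\mu(a\mid x)}\right]=\mathbb{E}\!\left[\sum_{a\in\mathcal{A}(x)}\pi(a\mid x)\right]=1,
\end{equation*}
so that $\mathbb{E}_\mu[w-\bar{w}]=1-\mathbb{E}_\mu[\bar{w}]=\mathbb{E}_\mu[\mathrm{Gap}]$. Substituting back yields the claimed bound $\mathbb{E}_\mu[(w-\bar{w})r]\leq R\,\mathbb{E}_\mu[\mathrm{Gap}]$.

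There is essentially no hard part: both inequalities are one-line monotonicity arguments, and the only non-cosmetic observation is that the observable per-sample $\mathrm{Gap}$ has the same expectation as the unobservable $w-\bar{w}$, which is exactly the content of the self-normalizing property. The only mild care to take is to state clearly that the equality $\mathbb{E}_\mu[\mathrm{Gap}]=\mathbb{E}_\mu[w-\bar{w}]$ holds only in expectation (not sample-wise), and that nonnegativity of $r$ is used both for the lower bound and implicitly to make the upper bound nontrivial, while the uniform bound $r\leq R$ is what allows us to pull $R$ out as a scalar factor independent of the realized $(x,a)$.
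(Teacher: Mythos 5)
Your proof is correct and is exactly the intended argument: pointwise nonnegativity of $(w-\bar{w})r$ for the lower bound, the bound $r\leq R$ for the upper bound, and the self-normalizing identity $\mathbb{E}_\mu[w]=1$ to identify $\mathbb{E}_\mu[w-\bar{w}]$ with $\mathbb{E}_\mu[1-\bar{w}]=\mathbb{E}_\mu[\mathrm{Gap}]$ (the paper states this identity just before the theorem, albeit with a typo writing $\mathbb{E}_\mu(w)$ where $\mathbb{E}_\mu(w-1)$ is meant, and omits the proof entirely as immediate). Nothing is missing; your only loose remark is that nonnegativity of $r$ is "implicitly" needed for the upper bound, whereas in fact $r\leq R$ and $w-\bar{w}\geq 0$ alone suffice there.
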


In particular, when $\bar w = 1+\log w$, Gap has a simple form $-\frac{1}{n}\sum_{i=1}^n\log w_i$, which equals to one of the IML objectives that we introduce next.

\subsection{Policy imitation for variance estimation}

Another way to reduce IPWE variance is by adding regularization terms, that penalize the variance of the estimated rewards of the new policy. However, direct IPWE variance estimation \citep{swaminathan2015counterfactual} is problematic, because it requires the unreliable IPWE mean estimation in the first place. To this end, we propose to use policy imitation learning (IML) to bound the IPWE variance.

We define IML by empirically estimating the Kullback-Leibler (KL) divergence between the logging and the proposed policies,
${\rm KL}(\mu \| \pi) = \mathbb{E}_\mu\log\frac{\mu}{\pi}
= -\mathbb{E}_\mu\log w$.
We consider three logging scenarios - full logging where we have access to the logged probabilities of all actions, partial logging where we only know the logged probability of the taken action and missing where no logging probabilities are available. Depending on the amount of logging, our definition of IML has the following forms:
\begin{talign}
    &{\rm IML}_{\rm full} (\pi)
    =
    - \frac{1}{n}\sum_{i=1}^n \sum_{\vct a\in\mathcal{A}(x_i)} 
    	\mu(\vct a | \vct x_i)\log
        w(\vct a | \vct x_i);
    \nonumber
    \\
    &{\rm IML}_{\rm part} (\pi) 
    =
    - \frac{1}{n}\sum_{i=1}^n \log w_i;
    \label{eq:iml-kl}
    \\
    &{\rm IML}_{\rm miss} (\pi) 
    =
    - \frac{1}{n}\sum_{i=1}^n \log \pi(a_i | x_i) - {\rm CE}(\mu;1),
    \nonumber
\end{talign}
where CE$(\mu;1) = - \frac{1}{n}\sum_{i=1}^n \log \mu_i$ is a log-separable constant term similar to \eqref{eq:ce-pg}, but without the reward weighting. The following theorem shows that IML is a reasonable surrogate for the IPWE variance.

\begin{restatable}[IML and IPWE variance]{thm}{thmIMLvariance}
\label{thm:iml-and-ipwe-variance}
Suppose $0\leq r\leq R$ and a bounded second-order Taylor residual $\lvert -\mathbb{E}_\mu\log w  - \mathbb{V}_\mu (w-1)  \rvert \leq B$, the IML objective is closely connected to the $\Delta$IPWE variance
  \begin{equation}
  \mathbb{V}(\Delta {\rm IPWE})
  \leq
  \frac{1}{n}
  \Bigl(
  2\mathbb{E}_\mu\bigl({\rm IML}\bigr)+B
  \Bigr) R^2.
  \end{equation}
\end{restatable}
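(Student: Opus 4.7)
The plan is to reduce the variance bound to a KL-type quantity in three steps. First, I would start from the stated form $\mathbb{V}(\Delta\mathrm{IPWE}) = \frac{1}{n}\mathbb{V}_\mu((w-1)r)$, bound variance by the second moment, and use $0\le r\le R$ to obtain $\mathbb{V}(\Delta\mathrm{IPWE}) \le \frac{R^2}{n}\mathbb{E}_\mu[(w-1)^2]$. The IPWE identity \eqref{eq:IPWE} gives $\mathbb{E}_\mu w = 1$, so $\mathbb{E}_\mu(w-1)=0$ and $\mathbb{E}_\mu[(w-1)^2] = \mathbb{V}_\mu(w-1)$, reducing the task to bounding $\mathbb{V}_\mu(w-1)$.

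Next, I would Taylor-expand $-\log w$ at $w=1$ as $-\log w = -(w-1) + \tfrac{1}{2}(w-1)^2 + \rho(w)$, take $\mathbb{E}_\mu$, and use $\mathbb{E}_\mu(w-1)=0$ again to obtain $-\mathbb{E}_\mu\log w = \tfrac{1}{2}\mathbb{V}_\mu(w-1) + \mathbb{E}_\mu\rho(w)$. Invoking the stated second-order residual hypothesis to absorb $\rho$ (with constants folded into $B$) yields $\mathbb{V}_\mu(w-1) \le -2\mathbb{E}_\mu\log w + B$.

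Finally, I would identify $-\mathbb{E}_\mu\log w$ with $\mathbb{E}_\mu[\mathrm{IML}]$ in each logging regime. For $\mathrm{IML}_{\rm part}$ this is direct since $\mathbb{E}_\mu[\mathrm{IML}_{\rm part}] = -\mathbb{E}_\mu\log w$. For $\mathrm{IML}_{\rm full}$ the inner per-context sum is exactly $\mathrm{KL}(\mu(\cdot\mid x)\|\pi(\cdot\mid x))$, whose outer expectation over $x$ is again $-\mathbb{E}_\mu\log w$. For $\mathrm{IML}_{\rm miss}$, subtracting the $\log$-separable $\mathrm{CE}(\mu;1)$ cancels the $\log\mu$ piece of $\log w$ and leaves the same quantity in expectation. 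Substituting this identification into the bound from the previous two steps produces $\mathbb{V}(\Delta\mathrm{IPWE}) \le \frac{R^2}{n}\bigl(2\mathbb{E}_\mu[\mathrm{IML}]+B\bigr)$, matching the claim.

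The step I expect to be most delicate is the constant bookkeeping at the Taylor stage: the canonical expansion of $-\log w$ produces coefficient $\tfrac{1}{2}$ on $(w-1)^2$, while the stated residual bound $B$ is written without that $\tfrac{1}{2}$, so the factor of two that appears in the final inequality has to be reconciled by either folding it into $B$ or into the definition of $\rho(w)$. Once this convention is pinned down, the remaining arithmetic is routine and the argument transfers uniformly across the three IML variants because each equals $-\mathbb{E}_\mu\log w$ in expectation up to $\pi$-independent constants that do not affect the inequality.
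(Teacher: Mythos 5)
Your proposal matches the paper's own proof essentially step for step: bound the variance of $(w-1)r$ by its second moment and $R^2$, cancel the first-order Taylor term of $-\log w$ using $\mathbb{E}_\mu w = 1$ so that $\mathbb{E}_\mu(w-1)^2 \approx 2\,\mathbb{E}_\mu(\mathrm{IML})$ up to the residual $B$, and divide by $n$ since $\Delta\mathrm{IPWE}$ is a sample mean. The factor-of-two bookkeeping you flag is a genuine (harmless) looseness in the paper as well --- the stated hypothesis omits the $\tfrac{1}{2}$ that the Taylor expansion produces, so taken literally it yields the even tighter constant $\mathbb{E}_\mu(\mathrm{IML}) + B$ --- and your resolution of folding it into $B$ or $\rho$ is exactly what the paper implicitly does.
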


Proof by Taylor expansion around $w=1$, $\mathbb{E}_\mu ({\rm IML}) = -\mathbb{E}_\mu \log (w) \approx \frac{1}{2}\mathbb{E}_\mu(w-1)^2$, where the first-order approximation term is exactly $\mathbb{E}_\mu (w-1)=\mathbb{E}(\pi-\mu)=0$.

\begin{corollary}
These two imitation methods are second-order similar: $\min_\pi-\mathbb{E}_\mu \log (\pi)\approx \min_\pi \frac{1}{2}\mathbb{E}_\mu\bigl(\frac{\pi}{\mu}-1\bigr)^2$.
\end{corollary}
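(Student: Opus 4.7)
The plan is to obtain the corollary as an immediate byproduct of the same Taylor argument that underpins Theorem~\ref{thm:iml-and-ipwe-variance}. First I would exploit log-separability: writing $w=\pi/\mu$, we have
\[
-\mathbb{E}_\mu\log\pi \;=\; -\mathbb{E}_\mu\log w \;-\; \mathbb{E}_\mu\log\mu,
\]
and the second summand does not depend on $\pi$. Hence it can be dropped inside an $\argmin_\pi$, reducing the task to comparing $-\mathbb{E}_\mu\log w$ with $\tfrac{1}{2}\mathbb{E}_\mu(w-1)^2$ to second order.

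Next I would expand $f(w)=-\log w$ around $w=1$ using Taylor's theorem,
\[
-\log w \;=\; -(w-1) \;+\; \tfrac{1}{2}(w-1)^2 \;+\; R_3(w),
\]
and integrate against $\mu$. The first-order term vanishes thanks to the same self-normalising identity already used in the sketch of Theorem~\ref{thm:iml-and-ipwe-variance}, namely $\mathbb{E}_\mu(w-1)=\mathbb{E}(\pi-\mu)=0$. What remains is
\[
-\mathbb{E}_\mu\log w \;=\; \tfrac{1}{2}\mathbb{E}_\mu(w-1)^2 \;+\; \mathbb{E}_\mu R_3(w).
\]
Because $R_3$ is third-order in $(w-1)$ it is exactly what is absorbed into the residual constant $B$ introduced by Theorem~\ref{thm:iml-and-ipwe-variance}. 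Substituting $w-1=\pi/\mu-1$ on the right-hand side and combining with the first display gives the desired second-order equivalence between the two imitation objectives, up to a $\pi$-independent additive constant.

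The main obstacle is not the algebra but making the symbol $\approx$ meaningful. When $w$ can be arbitrarily large, the Lagrange remainder on $-\log w$ is not automatically negligible, so one either needs a pointwise control on $|w-1|$ (mirroring the clipping $w\le\tau$ used for Clipped-IPWE) or one simply adopts the bounded-residual convention already encoded by the constant $B$ in the preceding theorem. Under either convention the derivation goes through verbatim, so no additional technical ingredients beyond those used for Theorem~\ref{thm:iml-and-ipwe-variance} are required, and the corollary follows as a one-line consequence of the same Taylor expansion.
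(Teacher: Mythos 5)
Your proposal is correct and follows essentially the same route as the paper: the corollary is obtained as a direct byproduct of the Taylor expansion of $-\log w$ around $w=1$ used for Theorem~\ref{thm:iml-and-ipwe-variance}, with the first-order term killed by $\mathbb{E}_\mu(w-1)=\mathbb{E}(\pi-\mu)=0$ and the $\pi$-independent term $-\mathbb{E}_\mu\log\mu$ dropped inside the $\argmin$ (exactly the log-separable constant the paper subtracts in its definition of ${\rm IML}_{\rm miss}$). Your remark that the $\approx$ only carries content under a bounded-residual assumption matches the paper's own convention of absorbing the third-order remainder into the constant $B$.
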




\subsection{Other Properties}
\label{sec:other}
\textbf{Generalizability:} In stochastic bandits, optimal policies are often greedy.
While classical arguments suggest that IPWE can learn greedy policies unbiasedly, e.g., with a saturated softmax, such policies may not generalize well.
For example, a greedy policy for binary selections may look like $\pi(y\mid x)=\frac{e^{y\theta^\top x}}{1+e^{\theta^\top x}}, \forall y\in\{0,1\}$, which saturates with $\|\theta\|_2\to\infty$.
Unfortunately, learning models with large weights often suggests large model complexity, which easily leads to overfitting \citep{bartlett1999generalization}.

Instead, Bayesian decision theory suggests a two-step approach: (1) fit a (reward-posterior) probability distribution of the optimal actions, which leads to smoother functions and (2) apply greedy argmax (or probability sharpening to handle ties).
Along this line, CE uses a log-softmax link function, which is a convex, margin-based loss function that further improves generalization.
It yielded better empirical results in Section~\ref{sec:criteo} and further motivates \eqref{eq:ce} as an offline learning objective.

\textbf{Adaptivity to unknown $\mu$.}
The fact that the CE objective \eqref{eq:ce} is independent to the logging policy $\mu$ indicates that we can optimize a causal objective without knowing or needing to estimate the underlying propensities, which avoids the potential pitfalls in model misspecification and confounding variables.
The following theorem establishes that optimizing $CE(\pi; r)$ based on the observed samples is implicitly maximizing the lower bound $\E_{\mu}r\log(\pi^*/\mu)$ for an unknown $\mu$.
\begin{restatable}[Statistical learning bound]{thm}{thmAdaptivity}
Let $\mu$ be the unknown randomized logging policy and $\Pi$ be a policy class. 
Let $\pi^* = \argmax_{\pi\in \Pi} {\rm CE}(\pi; r)$.
Then with probability $1-\delta$, $\pi^*$ obeys that
\begin{align*}
\E_{\pi^*} r - \E_{\mu} r \geq 
\E_{\mu}r\log(\nicefrac{\pi^*}{\mu}) \geq  \max_{\pi\in\Pi}
\bigl\{\E_{\mu}r\log\bigl(\nicefrac{\pi}{\mu}\bigr)\bigr\}\\
- O\left(\frac{\log(\max_{\pi\in\Pi}D_{\chi^2}(\mu\|\pi))+\log(|\Pi|/\delta)}{\sqrt{n}}\right)
\end{align*}
\end{restatable}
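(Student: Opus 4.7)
The plan is to split the chain of inequalities into a deterministic change-of-measure step and a uniform-convergence step. For the leftmost inequality $\E_{\pi^*}r - \E_\mu r \geq \E_\mu[r\log(\pi^*/\mu)]$, I would apply the importance-weighting identity $\E_{\pi^*}r - \E_\mu r = \E_\mu[(\pi^*/\mu - 1)r]$ together with the elementary inequality $t - 1 \geq \log t$ valid for all $t > 0$. Because $r \geq 0$ pointwise, multiplication preserves the inequality, and taking $\E_\mu$ yields the claim. This reuses the same lower-bound mechanism as Theorem~\ref{thm:gap} and, crucially, does not require knowing $\mu$.

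For the rightmost inequality I would use a standard ERM generalization argument. Writing $\log\pi(a|x) = \log\mu(a|x) + \log w(a|x)$ with $w = \pi/\mu$, the CE loss decomposes as $-\mathrm{CE}(\pi;r) = \frac{1}{n}\sum_i r_i\log w_i + C$, where $C$ depends on the data and $\mu$ but not on $\pi$. Hence the CE optimizer is also $\pi^* = \argmax_{\pi\in\Pi}\hat F(\pi)$ for $\hat F(\pi) := \frac{1}{n}\sum_i r_i\log w_i$, whose population mean is exactly $F(\pi) = \E_\mu[r\log(\pi/\mu)]$. The usual two-sided ERM-slack argument then gives $F(\pi^*) \geq \max_{\pi\in\Pi}F(\pi) - 2\sup_{\pi\in\Pi}|\hat F(\pi) - F(\pi)|$, so the task reduces to bounding the right-hand uniform deviation.

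For the uniform deviation, the key idea is that the importance-weight tails are sub-exponential under $\mu$. Markov on the moment-generating function gives $\E_\mu[e^{\lambda\log w}] = \E_\mu[w^\lambda]$, which equals $1$ at $\lambda=1$ (bounding the upper tail of $\log w$ by $e^{-t}$) and $1 + D_{\chi^2}(\mu\|\pi)$ at $\lambda=-1$ (bounding the lower tail by $(1+D_{\chi^2}(\mu\|\pi))e^{-t}$). Choosing a truncation threshold $M = \Theta(\log(\max_{\pi\in\Pi}D_{\chi^2}(\mu\|\pi)) + \log(|\Pi|/\delta))$ makes the bias from truncating $r_i\log w_i$ to $[-RM, RM]$ at most $O(R/n)$ uniformly over $\Pi$. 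Bernstein's inequality on the truncated summands, combined with a union bound over $\Pi$, then yields the claimed rate $O((\log(\max_\pi D_{\chi^2}) + \log(|\Pi|/\delta))/\sqrt{n})$ for the uniform deviation.

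The main obstacle is making the rate match the stated form exactly. Because $\log w$ is sub-exponential with one-sided parameter $\log(D_{\chi^2}+1)$ only on the lower side, a careful two-sided truncation is needed so that $\log D_{\chi^2}$ appears only in the truncation level (and hence only additively in the numerator of the bound), rather than as a multiplicative factor inflating the variance. One must also argue that the Bernstein variance proxy can be taken as a universal constant times $R^2$, since otherwise $\log D_{\chi^2}$ would leak inside the square root. Extending to infinite $\Pi$ via covering numbers is routine but unnecessary for the present finite-class statement.
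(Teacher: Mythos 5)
Your proposal follows essentially the same route as the paper's own (very terse) proof sketch: the first inequality via the change of measure plus $\log t\leq t-1$ with $r\geq 0$, and the second via an ERM/uniform-convergence argument whose concentration input is exactly the paper's Lemma~\ref{lem:concentration} (the MGF computation $\E_\mu[w]=1$ and $\E_\mu[\mu/\pi]=D_{\chi^2}(\mu\|\pi)$ giving two-sided exponential tails for $\log w$), followed by a union bound over $\Pi$ and the standard argmax transfer. You in fact supply more detail than the paper does on converting the tail bound into concentration of the sample mean (truncation level and Bernstein); the only caveat is that $\E_\mu[(\log w)^2]=O(\log^2(1+D_{\chi^2}))$ rather than $O(1)$, so the variance proxy is not a universal constant times $R^2$ as you assert, though this only affects logarithmic factors already absorbed by the theorem's $O(\cdot)$.
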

 Please refer to Appendix~\ref{app:concentration} for proofs and discussions.

A caveat is that although we can optimize the lower bound, we cannot explicitly evaluate the resulting lower bound of the policy improvement (e.g., to tell whether it is positive), without knowing $\mu$.
A heuristic solution is to use $\hat{\mu} = \argmin_\pi \mathrm{IML}(\pi)$ as a surrogate of $\mu$.




\begin{restatable}[Connections to natural policy gradients (NPGs) \citep{kakade2002natural}]{lemma}{lemNPG}
  Suppose the policy class is parametrized by $\vct \theta$, differentiable, and of the form $\pi(\vct a\mid\vct x;\vct \theta)$.
  Suppose the logging policy also resides in the policy class, as $\mu(\vct a\mid\vct x)=\pi(\vct a\mid\vct x;\vct\theta_0)$.
  The constrained optimization problem of natural policy gradient is a linear approximation to the PIL-IML in Lagrangian function form:
  \begin{align}
    \argmax_{\Delta \vct\theta}
    &\;
    \mathbb{E}_\mu
    \biggl[
      r(\vct x,\vct a)
      \biggl(
      \frac{
        \pi(\vct a\mid\vct x;\vct\theta_0+\Delta\vct\theta)
        }{
        \mu(\vct a\mid\vct x)
        }-1
      \biggr)
    \biggr]
    \label{eq:npg}
    \\
    {\rm s.t.}
    &\;
    \mathbb{E}
    \Bigl(
      {\rm KL}
      \bigl(
      \mu(\vct a\mid\vct x)
      \,\|\,
      \pi(\vct a\mid\vct x;\vct\theta_0+\Delta\vct\theta)
      \bigr)
      \Bigr)
    \leq\epsilon^2.
    \nonumber
  \end{align}
\end{restatable}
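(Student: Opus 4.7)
The plan is to form the Lagrangian of \eqref{eq:npg} and show that, after linearizing $\log w$ at $w=1$, it reduces to the PIL-IML form \eqref{eq:pil-iml}. First I would introduce a multiplier $\lambda\ge 0$ for the KL constraint:
\begin{equation*}
L(\Delta\vct\theta,\lambda) = \mathbb{E}_\mu\bigl[r(w-1)\bigr] - \lambda\,\mathbb{E}\bigl[\mathrm{KL}(\mu\,\|\,\pi_{\vct\theta_0+\Delta\vct\theta})\bigr] + \lambda\epsilon^2,
\end{equation*}
where $w = \pi(\vct a\mid\vct x;\vct\theta_0+\Delta\vct\theta)/\mu(\vct a\mid\vct x)$. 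By \eqref{eq:iml-kl}, the bracketed KL is exactly $\mathrm{IML}_{\mathrm{full}}(\pi_{\vct\theta_0+\Delta\vct\theta})$, so the constraint piece already matches the IML form required by \eqref{eq:pil-iml}; no approximation is needed for this piece.

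Next I would handle the IPWE-style objective term by invoking the tangent inequality $\log w \le w-1$ at $w=1$, which holds with equality to first order (illustrated in Figure~\ref{fig:lower-bounds}): $\log w = (w-1) + O((w-1)^2)$. Applied to the $w\ge 1$ branch of $\mathrm{PIL}_\mu$ (and trivially to the already-linear $w<1$ branch $r(w-1)$), this yields $\mathrm{PIL}_\mu(\pi) = \mathbb{E}_\mu[r(w-1)] + O(\mathbb{E}_\mu[(w-1)^2])$, and the same holds for $\mathrm{PIL}_\emptyset$. Thus the NPG objective $\mathbb{E}_\mu[r(w-1)]$ is precisely the first-order-in-$(w-1)$ expansion of PIL.

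Combining the two identifications gives $L = \mathrm{PIL}(\pi_{\vct\theta_0+\Delta\vct\theta}) - \lambda\,\mathrm{IML}_{\mathrm{full}}(\pi_{\vct\theta_0+\Delta\vct\theta}) + \mathrm{const} + O(\mathbb{E}_\mu[(w-1)^2])$, which is the PIL-IML Lagrangian of \eqref{eq:pil-iml} with the identification $\epsilon = -\lambda$. As a bonus, one could further Taylor-expand IML around $\Delta\vct\theta=0$ using the Corollary $-\mathbb{E}_\mu\log\pi \approx \tfrac{1}{2}\mathbb{E}_\mu(\pi/\mu-1)^2$ and the Fisher-information identity to recover the classical natural-gradient update $\Delta\vct\theta\propto F(\vct\theta_0)^{-1}\mathbb{E}_\mu[r\,\nabla_{\vct\theta}\log\pi]$ by setting $\partial L/\partial\Delta\vct\theta=0$. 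The main obstacle I expect is purely bookkeeping: keeping sign conventions consistent (the inequality constraint $\mathbb{E}[\mathrm{KL}]\le\epsilon^2$ forces $\lambda\ge 0$, corresponding to a negative $\epsilon$ in \eqref{eq:pil-iml}, i.e., the imitation regime suggested by Theorem~\ref{thm:iml-and-ipwe-variance}) and being explicit that ``linear approximation'' here refers to the tangent of $\log w$ at $w=1$ rather than a Taylor expansion in $\Delta\vct\theta$ directly.
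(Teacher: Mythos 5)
Your proposal establishes the wrong half of the claim. What you verify is that the constrained problem \eqref{eq:npg} is, via Lagrangian duality and the tangent approximation $\log w \approx w-1$, a first-order match to the PIL-IML objective \eqref{eq:pil-iml}: the KL constraint coincides with $\mathrm{IML}_{\rm full}$ from \eqref{eq:iml-kl}, and the linear-in-$w$ objective agrees with $\mathrm{PIL}$ up to $O\bigl((w-1)^2\bigr)$. That identification is essentially bookkeeping, and the paper takes it for granted. The substantive content of the lemma --- the reason it is titled a connection to \emph{natural policy gradients} --- is the other link in the chain: showing that \eqref{eq:npg}, when expanded in the parameter increment $\Delta\theta$ around $\theta_0$, reduces to Kakade's trust-region problem with a linear objective and a Fisher-metric quadratic constraint. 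You mention this only as an optional ``bonus'' and never carry it out, so nothing in your argument actually touches the NPG update.

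Concretely, the paper's proof consists of two Taylor expansions in $\Delta\theta$ that your proposal omits. First, using $\nabla_\theta \pi = \pi\,\nabla_\theta\log\pi$ and the realizability assumption $\pi(a\mid x;\theta_0)=\mu(a\mid x)$,
\begin{equation*}
\frac{\pi(a\mid x;\theta_0+\Delta\theta)}{\mu(a\mid x)}-1 \;\approx\; \Delta\theta^\top\nabla_\theta\log\pi(a\mid x;\theta_0),
\end{equation*}
which turns the objective of \eqref{eq:npg} into the linear NPG objective $\Delta\theta^\top\,\mathbb{E}_\mu\bigl[r\,\nabla_\theta\log\pi\bigr]$. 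Second, combining the second-order expansion $\mathbb{E}\bigl[\mathrm{KL}(\mu\|\pi)\bigr]\approx \tfrac12\mathbb{E}_\mu(w-1)^2$ (the same expansion underlying Theorem~\ref{thm:iml-and-ipwe-variance}; the first-order term vanishes because $\mathbb{E}_\mu w=1$) with the display above turns the constraint into $\Delta\theta^\top\bigl(\mathbb{E}_\mu[\nabla_\theta\log\pi\,(\nabla_\theta\log\pi)^\top]\bigr)\Delta\theta\le\epsilon^2$, i.e., the Fisher-information ball that defines the natural-gradient direction. These two computations are the proof; without them the lemma as stated is not established. (Your sign accounting for the multiplier is reasonable and correctly flags the paper's own sign convention in \eqref{eq:pil-iml}, but that is a side issue relative to the missing expansion in $\Delta\theta$.)
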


While PIL-IML can connect to NPG when the logging policy is included in the policy class, we should notice that the scope of PIL-IML is more general.
In offline learning, we also consider problems where the logging policy may not be realizable in the policy class.
In these problems, PIL-IML is still a valid objective, whereas NPG may not be properly evaluated.

Joachims et al., \citep{joachims2018deep} showed some empirical successes using a Lagrangian form of a ``self-normalized'' SNIPS estimator, but did not provide much justification.
We can show that the Lagrangian formulation of SNIPS transforms the original objective to a conservative one similar to our PIL-IML. To see this, notice that the optimal Lagrangian multipliers in \citep{joachims2018deep} are always around $1$, which equivalently means that the rewards are non-negative, agreeing with our intuitions.


\textbf{Doubly-robust estimators} can be natural extensions for PIL-IML. Please see Appendix~\ref{sec:dr} for details.

\section{IML for causal exploration}
In addition to variance control, IML has a number of useful properties and applications.

\textbf{IML causality diagnosis:}
We define the IML training loss to be the objective values of \eqref{eq:iml-kl} given partial or full logging probabilities.
A positive IML training loss indicates logging biases such as confounding variables, likely due to the exclusion of engineered features that existed in the original logging policy, or policy misspecification  when the true logging policy is not in the learned class. This property extends the classical propensity fitting methods \citep{robins1994estimation,robins1995semiparametric} that impute missing probabilities, 
which do not often check the feasiblity of the imputations when $\mu$ is assumed to be unknown.

\begin{restatable}[IML diagnosis]{lemma}{lemIMLdiagnosis}
\label{lem:IMLdiagnosis}
    Suppose the model family does not contain the logging policy $\Pi\not\ni\mu$, then $\min_{\pi\in\Pi}\mathbb{E}{\rm KL}(\mu\|\pi)\geq 0$.
    For example, if $\mu$ is a policy based on variables $x=(x_1,x_2)$, yet $\Pi$ contains policies with only support on $x_1$, then 
    $\min_{\pi\in\Pi}\mathbb{E}{\rm KL}(\mu\|\pi)\geq \mathbb{E}I_\mu(a;x_2\mid x_1)\geq0$,
    where $I_\mu$ is the mutual information 
    between the logging policy and the confouding variable.
    Equality is found at $\pi(a\mid x_1)=\mathbb{E}[\mu(a\mid x)\mid x_1],\forall x_1\forall a$.
\end{restatable}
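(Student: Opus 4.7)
The plan is to dispatch the two assertions in sequence: the first is essentially an observation about KL, while the second relies on a standard ``marginalize-and-subtract'' decomposition that splits the expected KL into a conditional mutual information plus a nonnegative residual.

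For the first claim, I would simply observe that $\mathrm{KL}(\mu\|\pi)\geq 0$ pointwise for any $\pi$, so $\mathbb{E}\,\mathrm{KL}(\mu\|\pi)\geq 0$, and hence the minimum over any class $\Pi$ is nonnegative. Nothing about $\Pi\not\ni\mu$ is actually needed to get the inequality; the point of the hypothesis is that when $\mu\in\Pi$ one attains zero, so a strictly positive IML loss certifies $\mu\notin\Pi$ and thus signals confounding or misspecification.

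For the second claim, I would introduce the marginalized policy
\begin{equation*}
q(a\mid x_1) := \mathbb{E}\bigl[\mu(a\mid x_1,x_2)\,\big|\,x_1\bigr],
\end{equation*}
which lies in the space of policies depending only on $x_1$ (and hence is a candidate element of $\Pi$ when $\Pi$ consists of all such policies; in the general case it is at least the correct ``target''). The key step is to add and subtract $\log q(a\mid x_1)$ inside the expected KL:
\begin{align*}
\mathbb{E}\,\mathrm{KL}(\mu\|\pi)
&= \mathbb{E}_{x}\sum_a \mu(a\mid x)\log\frac{\mu(a\mid x)}{q(a\mid x_1)} \\
&\quad + \mathbb{E}_{x}\sum_a \mu(a\mid x)\log\frac{q(a\mid x_1)}{\pi(a\mid x_1)}.
\end{align*}
I would then identify each piece. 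For the first, because $q(a\mid x_1)=\mathbb{E}[\mu(a\mid x)\mid x_1]$ is exactly the conditional marginal of $a$ given $x_1$ under the joint induced by $\mu$, the inner sum is $\mathrm{KL}(\mu(\cdot\mid x_1,x_2)\|q(\cdot\mid x_1))$, so its expectation over $x$ is precisely $\mathbb{E}\,I_\mu(a;x_2\mid x_1)$ by the standard identity relating conditional mutual information to expected KL between the conditional and marginal. For the second, I would pull the $x_2$-expectation through using the tower property, turning $\mathbb{E}[\mu(a\mid x)\mid x_1]$ into $q(a\mid x_1)$, so the term equals $\mathbb{E}_{x_1}\,\mathrm{KL}(q(\cdot\mid x_1)\|\pi(\cdot\mid x_1))\geq 0$.

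Combining these gives the stated inequality, and equality holds iff the residual KL vanishes, i.e. $\pi(a\mid x_1)=q(a\mid x_1)=\mathbb{E}[\mu(a\mid x)\mid x_1]$, which is the stated minimizer. I do not anticipate a genuine obstacle here; the only subtlety is being careful with the tower-property manipulation in the second term to justify replacing $\mu(a\mid x)$ with $q(a\mid x_1)$ inside the $\log$-ratio whose argument already depends only on $x_1$.
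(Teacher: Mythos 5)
Your proof is correct and follows essentially the same route as the paper's: both hinge on the fact that the $x_1$-marginalized policy $q(a\mid x_1)=\mathbb{E}[\mu(a\mid x)\mid x_1]$ minimizes the expected KL over $x_1$-measurable policies, with the minimum value identified as the (conditional) mutual information. Your add-and-subtract decomposition $\mathbb{E}\,\mathrm{KL}(\mu\|\pi)=\mathbb{E}\,I_\mu(a;x_2\mid x_1)+\mathbb{E}_{x_1}\mathrm{KL}(q\|\pi)$ is just a slightly more explicit packaging of the paper's cross-entropy argument, and it handles the conditioning on $x_1$ directly rather than via the paper's ``WLOG $x_1=\emptyset$'' reduction.
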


We often measure the IML feasibility gaps - i.e. indicators of how ``far'' the true logging policy is from the class of policies we are using - by perplexity (PPL), which is the exponent of the original IML loss. A perplexity of $K$ implies that even after finding the policy in our class which best fits the logged data, the remaining uncertainty in the original action policy is equivalent to a uniform selection among $K$ candidates. Perfect IML-fitting implies a CE of zero and a PPL of one.

\textbf{IML for pure exploration: }In batch offline cases, where we have multiple sequential opportunities to interact with the world - we suggest data recollection through online application of IML policies in \eqref{eq:iml-kl}.
There are three benefits: the variance of each IML policy is small, due to the connection between IML objective and IPWE variance (Theorem~\ref{thm:iml-and-ipwe-variance}); the performance of IML policy is predictable in offline evaluation and is typically comparable with the logging policy; lastly, with positive training loss and comparable performance, IML-resampling may greatly reduce model complexity by removing unexplained but unimportant decision factors from the logging policy. As a result, the new policies tend to be more exploratory. The improvements can be measured by increase in the entropy of the policy, which we quantify below (Proof in Appendix~\ref{sec:entropy-increase-proof} is nontrivial due to action-induced distribution shifts).

\begin{restatable}[Entropy increase]{thm}{thmEntropy}
\label{thm:entropy}
	Let $\vct x=(\vct x_1,\vct x_2)^\top$ be the vector of observed and confounding variables, respectively.
    If $\pi$ is the marginalization of the logging policy,
    $\pi(a\mid x_1)=\mathbb{E}[\mu(a\mid x)\mid x_1], \forall x_1\forall a$,
    we may guarantee an increase of expected entropy than that of the logging policy:
    \begin{equation}
    \mathbb{E} H(\pi) - \mathbb{E} H(\mu)
    = \mathbb{E} \bigl( {\rm KL}(\mu\|\pi) \bigr)
    \geq 0.
    \end{equation}
\end{restatable}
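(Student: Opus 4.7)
My plan is to prove the identity $\mathbb{E} H(\pi) - \mathbb{E} H(\mu) = \mathbb{E}[{\rm KL}(\mu\|\pi)]$ directly from the definitions of entropy and KL divergence, and then obtain the inequality as an immediate consequence of Gibbs. I will be careful to track which probability distribution each expectation is taken over, since this is where the ``action-induced distribution shift'' warning presumably arises.

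First I will expand
\[
\mathbb{E}\bigl[{\rm KL}(\mu\|\pi)\bigr]
= \mathbb{E}_{(x_1,x_2)}\Bigl[\sum_a \mu(a\mid x_1,x_2)\log\tfrac{\mu(a\mid x_1,x_2)}{\pi(a\mid x_1)}\Bigr],
\]
where the outer expectation is with respect to the joint context distribution $p(x_1,x_2)$ that generated the log. Splitting the logarithm isolates $-\mathbb{E} H(\mu)$ on one side and a cross-entropy term $-\mathbb{E}_{(x_1,x_2)}\sum_a \mu(a\mid x_1,x_2)\log \pi(a\mid x_1)$ on the other.

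The key step is a tower-property reduction of the cross-entropy term. Conditioning on $x_1$ and taking the inner expectation over $x_2\mid x_1$, the hypothesis $\pi(a\mid x_1)=\mathbb{E}[\mu(a\mid x)\mid x_1]$ lets me substitute $\mathbb{E}_{x_2\mid x_1}\mu(a\mid x_1,x_2) = \pi(a\mid x_1)$, which pulls the $\log\pi(a\mid x_1)$ factor (a function of $x_1$ alone) outside the inner integral. What remains is $\mathbb{E}_{x_1}\sum_a \pi(a\mid x_1)\log\pi(a\mid x_1) = -\mathbb{E} H(\pi)$. Rearranging then gives the announced identity, and the inequality $\mathbb{E}[{\rm KL}(\mu\|\pi)]\geq 0$ follows pointwise from Gibbs applied to $\mu(\cdot\mid x_1,x_2)$ against $\pi(\cdot\mid x_1)$ for each $(x_1,x_2)$.

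The main obstacle is bookkeeping rather than any deep inequality: $\mathbb{E} H(\mu)$ averages over the full joint $p(x_1,x_2)$ while $\mathbb{E} H(\pi)$ averages only over the marginal $p(x_1)$, and the \emph{action} distribution itself shifts from $\mu(\cdot\mid x_1,x_2)$ to its $x_2$-marginalization $\pi(\cdot\mid x_1)$. The tower-property step is exactly the place where I have to justify that this action shift is compatible with the collapse from $(x_1,x_2)$ to $x_1$; it works because contexts are exogenous in the contextual bandit setting, so the joint $p(x_1,x_2)$ is unchanged between the logging world and the evaluation world, and the defining equation for $\pi$ is precisely the conditional expectation that the tower property demands. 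Once this accounting is made explicit, the remainder of the argument is a short chain of equalities.
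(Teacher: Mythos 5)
Your proposal is correct and is essentially the paper's own argument run in the opposite direction: the paper starts from $\mathbb{E}H(\pi)-\mathbb{E}H(\mu)$ and substitutes $\pi(a\mid x_1)=\int\mu(a\mid x)p(x_2\mid x_1)\,dx_2$ into the $\pi\log\pi$ term (exploiting that $\log\pi(a\mid x_1)$ does not depend on $x_2$) to recover the KL term, which is exactly your tower-property step. Your bookkeeping of which distribution each expectation is over, and the final appeal to Gibbs for nonnegativity, match the paper's proof.
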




\section{Simpson's paradox and simulations}
Simpson's paradox \citep{simpson1951interpretation,julious1994confounding,bottou2013counterfactual} is often used to explain the importance to remove confounders when modeling rewards.
However, we use the example differently; 
we simulate action randomization that also leads to correct action recommendations.
Further, we validate the IML theoretical properties that detect confounders and improve exploration.

\begin{table}[ht]
    \centering
    \caption{Simpson's paradox for kidney stone treatments}
    \label{tab:kidneystone}
    \begin{tabular}{lll}
    Context &Open surgery& Small puncture\\\hline
    Small & \textbf{93\% (81/87)}  & 87\% (234/270) \\
    Large & \textbf{73\% (192/263)} & 69\% (55/80)\\
    Hidden & 78\% (273/350)& \textbf{83\% (289/350)} \\\hline
    \end{tabular}
\end{table}

In the Simpson's paradox example, a kidney stone treatment dataset with two actions is presented: an open surgery treatment or a small puncture treatment. The dataset was collected with an implicit bias where most people with large stones were treated with open surgery and most with small stones were treated with punctures, due to medical practices (such as risks and recovery times, which we do not model). 
An absolute majority of patients have small stones, which have higher cure rates with either treatment. As a result, despite the fact that an open surgery had a higher cure rate with either size of stones, regression on the treatment type without knowledge of the stone size would lead to the false conclusion that small punctures are correlated with higher cure rates (Table~\ref{tab:kidneystone}). It would seem that accurate reward modeling based action selection is impossible without the stone size contexts.


\begin{table}[ht]
\centering
    \caption{Offline learning with logged probabilities.}
    \label{tab:simulate-logging}
    \vspace{0.2em}
  \begin{tabular}{clcrc}
  \toprule
  Context & Action & Probability &  Size &  Cure \\\midrule
  Hidden & Surgery  &  24\% &  87 &  93\%  \\
  Hidden & Puncture &  76\% & 270 &  87\%  \\
  Hidden & Surgery  &  77\% & 263 &  73\%  \\
  Hidden & Puncture &  23\% &  80 &  69\%  \\
  \bottomrule
  \end{tabular} 
\end{table}

On the other hand, both treatment actions have nonzero frequencies given any stone size contexts.
We could alternatively assume that the actions are randomized with their probabilities logged, given the hidden contexts (Table~\ref{tab:simulate-logging}).
Note that these probabilities are conditional on the internal states of the decision system and are different from the observed marginal probabilities, e.g.,
$
\mu(\text{Surgery} \mid \text{Small Stone}) = \frac{87}{87+270}   \approx 24\%, 
\mu(\text{Surgery} \mid \text{Large Stone}) = \frac{263}{263+80}  \approx 77\%.
$

In this way, unbiased offline learning is possible by weighting the action effects according to their inverse action probabilities, i.e., via IPWE.
Thus logging would lead to the correct decision (surgery treatment). Doubly robust (DR) estimators may further reduce IPWE variance, but the amount of reduction depends on the quality of the reward model and can even be negative in some cases (Table~\ref{tab:offline-simulated} first column).

\begin{table}[ht]
\centering
\caption{Estimated cure rates of the surgery treatment}
\label{tab:offline-simulated}
\begin{tabular}{lcc}
\toprule
Method & Original data & IML-resampled \\\midrule
IPWE & 83.3$\pm$5.0 & 83.3$\pm$3.7 \\
Q-learning & 78.0$\pm$1.6 & {\bf 83.3$\pm$1.4} \\
DR & {\bf 83.3$\pm$2.6} & 83.3$\pm$2.5 \\
DR worst case & 83.3$\pm$5.4 & 83.3$\pm$4.0\\
\bottomrule
\end{tabular}
\end{table}

\begin{figure*}[t]
  \centering
  \begin{subfigure}[b]{0.305\linewidth}
    \centering\includegraphics[height=0.95\linewidth]{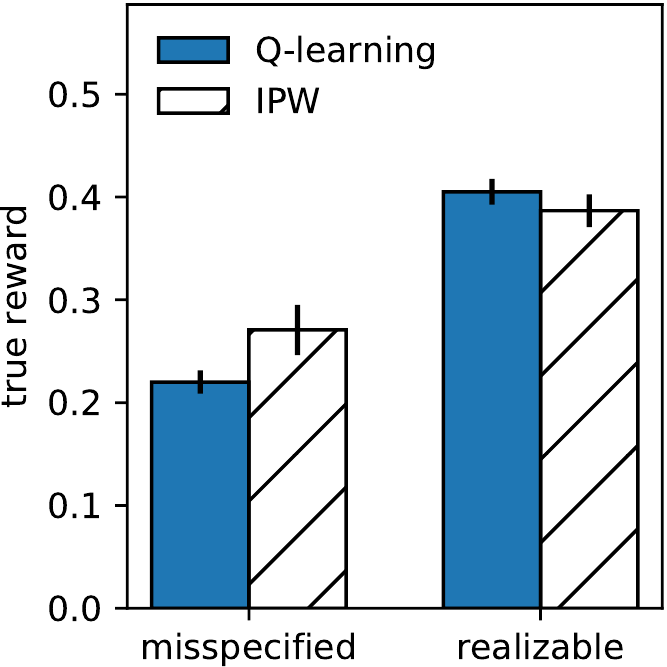}
    \caption{Unbiased IPWE is better than Q-learning with misspecified models.}
    \label{fig:optdigits-bias}
  \end{subfigure}\hspace{1.5em}%
  \begin{subfigure}[b]{0.305\linewidth}
    \centering\includegraphics[height=0.95\linewidth]{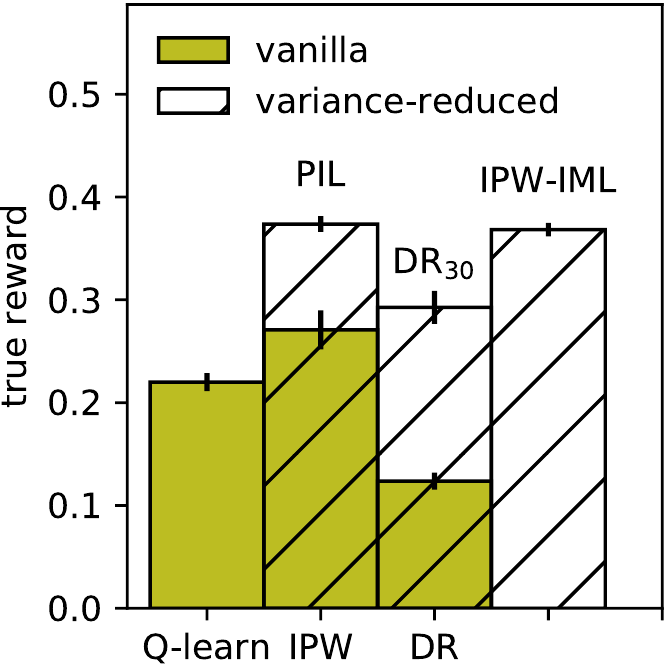}
    \caption{Variance reduction techniques further improve offline learning.}
    \label{fig:optdigits-variance}
  \end{subfigure}\hspace{1.5em}%
  \begin{subfigure}[b]{0.305\linewidth}
    \centering\includegraphics[height=0.95\linewidth]{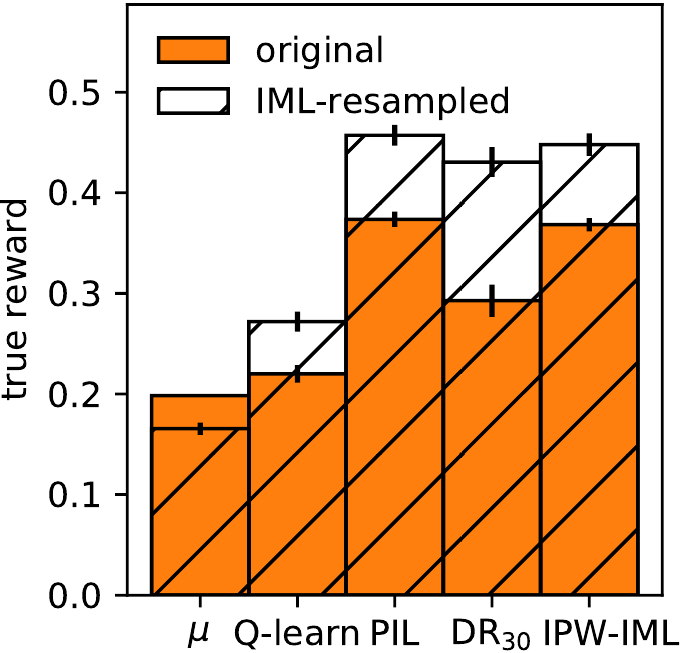}
    \caption{Online application of IML improves future offline learning.}
    \label{fig:optdigits-resample}
  \end{subfigure}
  \caption{Multiclass-to-bandit conversion on UCI optdigits dataset. 
  Proposed improvements are in hollow style. Results for the other UCI datasets are included in the Appendix~\ref{app:uci}.}
  \vspace{-1em}
  \label{fig:uci}
\end{figure*}

Based on our results, we use IML to first examine the logged action probabilities.
In this case, IML is underfitted with $1.15 (>1)$ perplexity with a uniform policy, which suggests that there exist unobserved confounders.
Since we do not have access to the additional confounders, we could simply resample data with the IML-fitted policy.
We simulate resampling by weighing the examples according to the ratios between the IML policy and the logged probabilities.
Due to self-normalization properties of the weights, the effective sample size (sum of all weights) remains the same for both small and large stone cases for fair comparisons.
Table~\ref{tab:offline-simulated} shows that IML-resampling decreases the variance of all methods. 
This is because the new logging probabilities become more balanced (uniformly random trials), without depending on the hidden decision variable: the unknown stone sizes.

\section{UCI bandit simulations}
We use UCI multiclass-to-bandit conversion datasets that originally appeared in \citep{dudik2011doubly,wang2016optimal} to simulate contextual bandit problems.
For each data point, we sampled one class as the action and observed partial feedback whether the sampled class is the true class for that data point. Following previous literature, we constructed the logging policies by the softmax prediction of a linear logistic regression classifier trained on skewed datasets with induced covariate shifts.
Since we have full knowledge of the original multiclass labels, we can exactly evaluate the learned policies during test time by multiclass fractional accuracy.
We used 50\% train-test splits where the training sets were converted to bandit datasets.
Figure~\ref{fig:uci} also reports $95\%$ confidence intervals from $100$ repetitions.
With this dataset, we show how reward modeling biases and IPWE variances affect offline learning, how to reduce variance using PIL-IML, and how to adapt IML into a batch-online method to collect better data for future offline learning.

As discussed earlier, \textbf{Q-learning biases} can come from missing confounders and/or model underfitting, leading to variable under-utilization.
We simulate this effect by using a second-order model,
$
\phi(\vct x,\vct a) = \vct x^\top 
UV^\top\vct a,
$
with insufficient rank. For UCI optdigits, a rank-2 model could only realize $67\%$ multiclass accuracy when trained with full information, compared with $95\%$ accuracy for a full-rank model, i.e.
$
\phi(\vct x,\vct a) = \vct x^\top 
W\vct a.
$
We call rank-2 models misspecified and full-rank models realizable.

Figure~\ref{fig:optdigits-bias} shows that Q-learning and IPWE policy learning behave differently for misspecified model families.
This is because Q-learning studies the biased correlation effects between the rewards and context-action pairs, whereas IPWE studies the unbiased causal effects of the actions given the contexts.
Therefore, IPWE lead to better actions.
On the other hand, Q-learning and IPWE policy learning behaved similarly for realizable model families, as expected from our analysis. 

\textbf{Variance-reduced methods} further improved offline learning. Figure~\ref{fig:optdigits-variance} continues the experiments with misspecified models, where the solid boxes are carried over from Figure~\ref{fig:optdigits-bias}, with the addition of the logging policy itself ($\mu$), and doubly robust (DR).

We compare three different variance-reduction approaches: PIL, PIL with DR extensions, and the original IPWE with IML regularization.
All three approaches improved the final policy.
The results were not very sensitive to the parameter choices, which we picked $\epsilon=10^{-4}$, after a coarse grid search.

\textbf{IML causality diagnosis} was able to detect model underfitting due to insufficient rank. With a theoretical limit of $0$, the rank-2 model family could only achieve $0.60$ training loss, which indicates that the best IML policy cannot explain $\exp(0.60)=1.82$ perplexity in the logged actions. Full-rank action policies can achieve a near-zero ($0.02$) training loss.

\textbf{IML-resampling to collect additional data} improves the policies learned with all methods (Figure~\ref{fig:optdigits-resample}, changes from solid to hollow boxes), despite a small cost during IML resampling ($\mu$ as a policy in the first box).
This is because better exploration leads to smaller inverse probability weights.
Besides, IML-fitting alleviates model underfitting biases in the new data.
Finally, the cost of IML resampling can be estimated prior to applying IML online and is fundamentally unavoidable for all methods that use the same model class.

\textbf{Additional results} on the other UCI datasets are in the appendix.
In those examples, we further observed that improvements from variance reduction are significant only when IML loss is above zero. IML loss at zero indicates that there were no confounding variables or model misspecification (e.g., Figure~\ref{fig:optdigits-bias} full-rank model); and that both naive Q-learning and IPWE would perform similarly to the variance-reduced methods.

\section{Large-scale experiments}\label{sec:criteo}
We extend our study on Criteo counterfactual-analysis dataset \citep{lefortier2016large}.
This dataset is particularly interesting,
because the logging policy is in fact unrealizable from the published features and models.
We made novel discoveries on (1) the existence of large variance due to Cauchy-like importance weight distribution and (2) the existence of modeling biases and confounding variables.
We communicated and confirmed our hypotheses with the original authors.

\textbf{The dataset} contains logs of display advertisements shown to users, the hidden action probabilities, the user context features as well as features for every candidate action.
However, we observed some discrepancies when we reran the provided scripts
(Table~\ref{tab:reproduce} in Appendix~\ref{app:criteo}).

First, we noticed that the importance weight distributions, e.g., between the uniform policy and the logging policy, resemble heavy-tail distributions with \textbf{unbounded variance}. I.e., $P(|W|>w)$ decays slower than $O(1/w^2)$ in Figure~\ref{fig:IPWE-weight}.
This fact invalidates the confidence interval (CI) estimation in the original paper \citep{lefortier2016large} based on Central Limit Theorem (CLT), which requires bounded variance.
For example, while the IPWE of the uniform distribution is $44.7\pm2.1$ in the test split, it becomes $52.6\pm 18.7$ in the training split, due to one observation with importance weight $4.9\times 10^4$.

To reduce heavy-tail uncertainties, we not only used weight clipping \citep{bottou2013counterfactual,wang2016optimal}, but also novelly applied \textbf{subsampling bootstrap} \citep{politis1994large},
which only assumes that $n^{-\beta}\bigl({\rm IPWE}_n - \mathbb{E}({\rm IPWE})\bigr)\xrightarrow[{\rm dist.}]{n\to\infty}F$ asymptotically converges to any fixed distribution $F$,
where $\beta$ can be fitted from data.
By weight clipping, we showed that $\beta$ improved from $0.3$ to near $0.5$, which would be the CLT ideal case.
See Appendix~\ref{app:bootstrap} for more details.

Then, using IML \eqref{eq:iml-kl}, we estimated the KL-divergence between the logging policy and its realizable imitation to be 0.40($\gg0$). I.e., the logging policy is \textbf{unrealizable} by an exponential-family model with raw features, which contradicts \citep{lefortier2016large} and essentially implies confounders (Lemma~\ref{lem:IMLdiagnosis}).
Even with a more complex second-order model with 256-dimensional embedding, IML improved to $0.35$, but was still large.
While having perfect imitation is a sufficient but not necessary condition,
as long as key decision variables are included \citep{strehl2010learning},
we found the situation practically difficult.

\begin{table}[t]
\centering
\caption{Criteo counterfactual analysis dataset \citep{lefortier2016large}.
}
\label{tab:criteo}
\begin{tabular}{l c c c}
\toprule
\makecell{Approach \\ $\lceil\rceil$=greedy} &
\makecell{Offline Est. \\ ($\times 10^4$)} &
\makecell{Gap \eqref{eq:gap} \\ (100\%)} &
\makecell{Paired $\hat\Delta$ \\  ($\times 10^4$)} \\
\midrule
{Logging} & (53.3, 53.7) & ( 0.0, 0.0) & ( 0.1, 1.8) \\
{IML}     & (51.5, 53.3) & (-0.3, 0.3) & ( 0.0, 0.0) \\
{Uniform} & (41.8, 52.6) & ( 7.0, 8.0) & (-10\phantom{.}, 0.1) \\
{$\lceil$Q-learn$\rceil$}
          & (49.3, 55.9) & ( 3.0, 4.0) & (-2.8, 3.1) \\
{POEM \citep{swaminathan2015counterfactual}}
          & (51.4, 53.7) & ( 0.1, 0.7) & (-1.0, 1.1)\\
{IPWE$_{100}$}
          & (51.9, 54.5) & (-0.2, 0.5) & (-0.6, 1.9) \\
{PIL-IML} & (52.3, 53.7) & (-0.2, 0.2) & ( 0.0, 0.8) \\
{$\lceil$IML$\rceil$}
          & (53.0, 55.1) & (-0.4, 0.2) & ( 0.2, 2.4) \\
{$\lceil$PIL-IML$\rceil$}
          & (53.1, 55.2) & (-0.3, 0.3) & ( 0.6, 2.9) \\
\bottomrule
\end{tabular}
\vspace{-1em}
\end{table}

Table~\ref{tab:criteo} reports offline 95\% CIs using IPWE with weights clipped at 500 and subsampling bootstrap.
Similar to \citep{bottou2013counterfactual}, the first column includes both IPWE uncertainties and any missing clicks from the self-normalization Gaps, which we report in the second column.
Since the global click rate is very low (around 5\%),
using $0<R<1$ extreme values may overestimate the upper bounds.
Reasonably, we used the additional assumption that the expected click rates is always between $0\sim1\%$ in any reasonable subsets, twich their global average.
The last column reports any improvements $\hat\Delta$ compared with the realizable IML.
Here, paired-tests were used to avoid reward estimation noise.
The formula also used weight-clipping and Gap-filling combinations: 
$\hat\Delta = \frac{1}{n}\sum_{i=1}^n \Bigl[{\rm clip}(\frac{\pi_i-\hat\mu_i}{\mu_i}, -\tau, \tau)r_i\Bigr] +
\Bigl[\frac{1}{n}\sum_{i=1}^n {\rm clip}(\frac{\pi_i-\hat\mu_i}{\mu_i}, -\tau, \tau)\Bigr]R, {\rm \; s.t. \;} 0<R<1\%$.

Beating the realizable IML are the logging policy with its secret features, ${\rm PIL}_\mu + 0.8 \,{\rm IML}$ with a tight margin, and most interestingly, greedy policies by sharpening PIL-IML or even IML, which is reward-agnostic but near-optimal.
Intuitively,
optimal policies are often greedy in stochastic environments.
An option is always desirable no matter how small its improvements are, as long as they are consistent.
While IPWE$_{100}$ would also yield greedy solutions, 
they tend to learn saturated softmax, which may not generalize well
(Section~\ref{sec:other}).


Notice, this stochastic view may have taken advantage of the temporal overlaps between the train/test splits.
In contrast, randomized policies are more popular in adversarial bandits to account for temporal uncertainties \citep{bubeck2012regret,mcmahan2013ad}.
Similarly, if we were allowed to collect new data, we would use PIL-IML (non-greedy) to improve exploration (Theorem~\ref{thm:entropy}).
The perplexity would increase from $3.6$ to $5.2$ and the heavy-tail situations would alleviate.

\section{Discussion and conclusion}
\emph{Why should I imitate a policy when I already have the logged propensities?}
First, IPWE might suffer from high variance and DR is often not much better because we seldom observe all major variables that contribute to the variance of the reward. Second, whether the imitated policy produces probabilities that match the logged propensities or not reveals whether there are hidden decision variables used by the logging policy. 

\emph{What can I use an imitated policy for?}
We can run it to collect new data, which guarantees that (if we cannot discover or log them for some reasons) the new data will have a realizable policy without unknown confounding decision variables. Besides, the IPWE estimate of the imitated policy should not suffer from high variance and we will have good evidence whether the imitated policy performs similarly to the unrealizable logging policy in most cases. 

\emph{What are the take-home messages for data scientists and developers?}
The most important message is that one should be cautious about using and evaluating Q-learning approaches in problems where decisions are involved. We highlight the value of having randomized policies that allow one to marginalize over unobserved confounding variables and make statistically valid inferences despite possible confounders. Lastly, the probabilities corresponding to actions that are not taken are also useful and can be used to reduce variance in offline policy valuation and policy optimization.

\subsubsection*{Acknowledgements}

We appreciate Haibin Lin for help with mxnet sparse matrix operators, Yuyang (Bernie) Wang, Tengyang Xie for detailed discussions, Adith Swaminathan for insights about the Criteo counterfactual challenge dataset, and the anonymous reviewers for their constructive comments.




\bibliographystyle{plain}
\bibliography{ma2018sim}

\onecolumn

\appendix




\section{Related work}

Clipped-IPWE was originally used in offline evaluation, which, however, leaves the remaining gaps to be estimated. \cite{bottou2013counterfactual} estimated the remainders via Bernstein's inequality, which may empirically be a loose estimation, and \cite{wang2016optimal} used reward modeling, which assumes nonexistence of confounding variables and universal approximators, which are under our scrutiny.

Some theoretical justification for the large variance in IPWE is provided in \citep{cortes2010learning,wang2016optimal}.
Doubly robust estimation (DR)  was initially proposed to
take advantage of a possibly inaccurate Q-learning model for estimating the expected rewards
\citep{robins1994estimation,dudik2011doubly,jiang2015doubly}, 
yet DR does not fundamentally solve the large variance inverse probabilities \citep{kang2007demystifying}.
Another variant is to clip IPWE into a head average and a tail bound
\citep{bottou2013counterfactual,bembom2008data},
but this approach has not been formulated as an stochastic optimization objective.
As an extension, SWITCH method \cite{wang2016optimal} was proposed to tighten the mean-squared error (MSE) bounds in clipped IPWE via Q-learning estimates to achieve minimax-optimality. However, SWITCH was not directly used for optimization, either.
%
%
On policy imitation, a related topic is to learn the probabilities through propensity fitting \citep{strehl2010learning,austin2011introduction}. We use similar techniques, but only for regularizing safe exploration, where we also assume knowledge of the exact logged probabilities. 
Our solution connects to other variance reduction techniques \citep{swaminathan2015counterfactual,joachims2018deep} and we show that ours are more stable with varying hyperparameter choices.

In the RL literature, Munos et al \citep{munos2016safe} uses importance weighting to improve Q-learning convergence. When choosing the importance weighting as a soft-Q policy \citep{haarnoja2017reinforcement}, they proved a contraction property that leads to final convergence to true Q*.
While the method works well for tabular settings, it is unclear whether Q* is fundamentally easy to estimate in continuous settings from data with confounding variables.

Similarly, trust-region policy gradient method \citep{schulman2015trust} used Pinsker's inequality to motivate a worst-case KL-divergence as regularization. Differently, we directly motivated expected KL-divergence as regularization.
This was indeed their empirical choice as well, but with less clear intuitions.
The follow-up proximal policy gradient method \citep{schulman2017proximal} used direct IPWE with heavy clipping on both sides. Our method leads to a smoother policy that tend to generalize better (Section~\ref{sec:other}).

\section{Example on IPWE large variance and statistical analysis}

\begin{example}[Epsilon-greedy]
\label{eg:epsilon-greedy}
Suppose the logging policy (ignores the context and) tosses a biased coin to choose between two actions: a rare action $\mu(A)=\epsilon\ll 1$ and a default action $\mu(C)=1-\epsilon$.
Suppose the rewards are noiseless and only given to the rare action, $r(a)=1_{\{a=A\}}$.
With $n$ examples,
the rare action will be included in the logging dataset at least once with high probability $(1-(1-\epsilon)^n)\geq 1-e^{-n\epsilon}$.

An (optimal) deterministic policy that always chooses $A$ has expected reward $1$. While IPWE is unbiased, its mean-squared error (MSE) can be at least $\frac{1}{n}\mathbb{E}_\mu [(\frac{\pi}{\mu}r)^2-(\mathbb{E}_\pi r)^2] = \frac{1}{n}[\epsilon(\frac{1}{\epsilon})^2-1] \geq \frac{1}{2n\epsilon}$, because the rare event has significant weight $\frac{1}{\epsilon}$.
On the other hand, Q-learning has zero variance but a unit bias if $A$ is not observed, with probability at least $e^{-n\epsilon}$.
Comparatively, the MSE in Q-learning is exponentially smaller than IPWE in fully observed but imbalanced datasets.
\end{example}

\section{Concentration inequalities for PIL-IML}\label{app:concentration}
The concentration for the weight clipping-based ${\rm PIL}$ straightforwardly follows from the boundedness and Hoeffding's inequality, as was observed in \citep{bottou2013counterfactual}.

In this section, we derive the concentration inequality for the cross-entropy based ${\rm PIL}$.
\begin{lemma}[Exponential tails for the cross-entropy weights]\label{lem:concentration}
Let $\log(\pi(x,a)/\mu(x,a))$ be a random variable induced by $x\sim \cD, a|x\sim \mu(a|x)$.
For all $t>0$
  $$
\P\left[ |\log\frac{\pi(x,a)}{\mu(x,a)}| \geq t\right]  \leq  e^{-t}(1 +  \E_\pi[(\mu/\pi)^2])
$$
where $\E_\pi[(\mu/\pi)^2] =: D_{\chi^2}(\mu\|\pi)$ is the $\chi^2$-divergence.

Let $\bar{w}(x,a) = \begin{cases}
\log\frac{\pi(x,a)}{\mu(x,a)} & \mbox{ if }\pi>=\mu;\\
\frac{\pi(x,a)}{\mu(x,a)}-1 & \mbox{otherwise.}
\end{cases}$
then 
$$
\P\left[ |\log\frac{\pi(x,a)}{\mu(x,a)}| \geq t\right]  \leq  2 e^{-t}
$$
\end{lemma}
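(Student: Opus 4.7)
My plan is to prove both tail bounds by splitting $\{|\log(\pi/\mu)|\ge t\}$ into the upper tail $\{w\ge e^t\}$ and the lower tail $\{w\le e^{-t}\}$, where $w=\pi/\mu$, and bound each side by Markov's inequality combined with a change of measure between $\mu$ and $\pi$. The two normalizations I will use repeatedly are $\E_\mu[w]=\E_x\sum_a\pi(a\mid x)=1$ and $\E_\pi[\mu/\pi]=\E_x\sum_a\mu(a\mid x)=1$, both of which are immediate from summing the conditional action distributions.

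For the upper tail of the first inequality, Markov under $\mu$ gives $\P_\mu[w\ge e^t]\le e^{-t}\E_\mu[w]=e^{-t}$. For the lower tail $\P_\mu[\mu/\pi\ge e^t]$, a direct Markov bound under $\mu$ would produce the unwanted prefactor $\E_\mu[\mu/\pi]=\E_\pi[(\mu/\pi)^2]=D_{\chi^2}(\mu\|\pi)$, which is exactly the factor we want to quarantine to the $\chi^2$ term rather than the $e^{-t}$ rate. I sidestep this by changing measure first: $\P_\mu[\mu/\pi\ge e^t]=\E_\pi[(\mu/\pi)\mathbf 1\{\mu/\pi\ge e^t\}]$; then on the event $\{\mu/\pi\ge e^t\}$ the pointwise inequality $\mu/\pi\le e^{-t}(\mu/\pi)^2$ lets me bootstrap the first moment into a second moment, giving $e^{-t}\E_\pi[(\mu/\pi)^2]=e^{-t}D_{\chi^2}(\mu\|\pi)$. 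Summing the two tails establishes the first bound.

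For the clipped surrogate $\bar w$, the structural gain of replacing $\log w$ by $w-1$ on $\{w<1\}$ is that $\bar w\ge -1$ (since $w>0$), so the lower tail $\P[\bar w\le -t]$ vanishes for $t>1$. For the upper tail I would apply Markov to the MGF and split by $\{w\ge 1\}$ versus $\{w<1\}$: $\E_\mu[e^{\bar w}]=\E_\mu[w\mathbf 1\{w\ge 1\}]+\E_\mu[e^{w-1}\mathbf 1\{w<1\}]\le 1+1=2$, using $\E_\mu[w]=1$ on the first set and $e^{w-1}\le 1$ on the second, which yields $\P[\bar w\ge t]\le 2e^{-t}$. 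The main obstacle I anticipate is the residual lower-tail window $0<t\le 1$ where the $-1$ floor has not yet made the event empty: there the trivial bound $\P\le 1$ already dominates $2e^{-t}$ for $t\le\ln 2$, and for the remaining narrow range $\ln 2<t\le 1$ I would close the gap by a truncated Markov estimate on $e^{-\bar w}$ that exploits the clipping (so that the $\chi^2$ factor which appeared in the unclipped lower tail is no longer re-introduced). This last step is the delicate part of the argument; everything else is a routine application of Markov's inequality after a well-chosen change of measure.
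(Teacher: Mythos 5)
Your treatment of the first inequality is correct and is essentially the paper's own argument: splitting $\{|\log w|\geq t\}$ into $\{w\geq e^t\}$ and $\{\mu/\pi\geq e^t\}$ and applying Markov with $\E_\mu[w]=1$ and $\E_\mu[\mu/\pi]=\E_\pi[(\mu/\pi)^2]$ gives exactly $e^{-t}(1+D_{\chi^2}(\mu\|\pi))$; the paper reaches the same two terms by a single Chernoff step on $e^{|\log w|}$ followed by the split $\{\pi\geq\mu\}$ vs.\ $\{\pi<\mu\}$. (Your "change of measure to sidestep the unwanted prefactor" is a detour: direct Markov under $\mu$ on $\mu/\pi$ already yields $e^{-t}\E_\mu[\mu/\pi]=e^{-t}D_{\chi^2}(\mu\|\pi)$, which is precisely the target term, so there is nothing to sidestep.) Likewise your upper-tail bound $\E_\mu[e^{\bar w}]=\E_\mu[w\mathbf 1\{w\geq1\}]+\E_\mu[e^{w-1}\mathbf 1\{w<1\}]\leq 2$ is exactly the paper's computation and correctly gives $\P[\bar w\geq t]\leq 2e^{-t}$.

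The "delicate part" you flag --- the lower tail of $\bar w$ on $t\in(\ln 2,1]$ --- is not just delicate, it is unfixable: the two-sided claim $\P[|\bar w|\geq t]\leq 2e^{-t}$ is false there, so your planned truncated-Markov estimate on $e^{-\bar w}$ (which at best yields $\E_\mu[e^{-\bar w}]\leq e$, hence $\P[\bar w\leq -t]\leq e^{1-t}\geq 1$ on that range) cannot close the gap. Concretely, take one context with two actions, $\mu=(0.7,0.3)$ and $\pi=(0.05,0.95)$: with probability $0.7$ one gets $\bar w=0.05/0.7-1\approx-0.93$ and with probability $0.3$ one gets $\bar w=\log(0.95/0.3)\approx 1.15$, so $\P[|\bar w|\geq 0.9]=1>2e^{-0.9}\approx 0.81$. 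You should be aware that the paper's own proof elides this: in the second display it bounds $\E_\mu[e^{|\bar w|}]$ using $e^{\pi/\mu-1}$ on the event $\{\pi<\mu\}$, where the correct integrand is $e^{|\bar w|}=e^{1-\pi/\mu}$; with the sign fixed one only gets the constant $1+e$, and even that does not rescue the two-sided claim for small $t$ since the Chernoff step itself is fine --- the issue is that the statement is simply one-sided in nature. (The lemma's second display is also evidently a typo, repeating $|\log(\pi/\mu)|$ where $|\bar w|$ is meant; you read it as intended.) The honest, and for the downstream Theorem entirely sufficient, version is: $\P[\bar w\geq t]\leq 2e^{-t}$ together with the deterministic floor $\bar w\geq -1$; you have proved both, and you should state the lemma that way rather than chase the false two-sided bound.
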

\begin{proof}
For readability, we use $\pi$ and $\mu$ as shorthands for random variables $\pi(x,a)$ and $\mu(x,a)$.
By Chernoff's argument and Markov's inequality:
\begin{align*}
    \P\left[ |\log\frac{\pi }{\mu}| \geq t\right]  \leq&  \P\left[ e^{|\log\frac{\pi}{\mu}|} \geq e^t\right]\\
    \leq& e^{-t} \E_\mu[ e^{|\log\frac{\pi}{\mu}|} ]\\
    =&e^{-t} \left\{ \E_\mu[ e^{\log\frac{\pi}{\mu}} \mathbf{1}(\pi\geq\mu) ] +  \E_\mu[ e^{\log\frac{\mu}{\pi}} \mathbf{1}(\pi<\mu) ] \right\}\\
    =& e^{-t} \left\{ \E_\mu[ \frac{\pi}{\mu} \mathbf{1}(\pi\geq\mu) ] +  \E_\mu[ \frac{\mu}{\pi} \mathbf{1}(\pi < \mu) ]  \right\}\\
    \leq&e^{-t} \left(1 +  \E_\pi[ \frac{\mu^2}{\pi^2} ]  \right).
\end{align*}
Now for the modified cross-entropy objective, by the same argument, we have
\begin{align*}
\P\left[ |\bar{w}| \geq t\right]  \leq&  e^{-t} \left\{ \E_\mu[ e^{\log\frac{\pi}{\mu}} \mathbf{1}(\pi\geq\mu) ] +  \E_\mu[ e^{\frac{\pi}{\mu}-1} \mathbf{1}(\pi<\mu) ] \right\}\\
\leq &e^{-t} \left\{ \E_\mu[ \frac{\pi}{\mu} \mathbf{1}(\pi\geq\mu) ] +  \E_\mu[ 1 \mathbf{1}(\pi<\mu) ] \right\}\leq 2e^{-t}.
\end{align*}
\end{proof}

\thmAdaptivity*

\begin{proof}[Proof sketch]
For every $\pi\in \Pi$, we use Lemma~\ref{lem:concentration} to get that the empirical estimate converges its mean for both the objective that we optimizes over and the empirical estimate of the gap. Then we apply a union bound to make it uniform over the entire policy class. Since $\pi$ is the argmax on the empirical estimates, it must also be close to the argmax on the population quantity, which concludes the proof.
\end{proof}
The assumption on discrete $\Pi$ is not essential. It can be replaced with a uniform convergence bound for infinite alphabet. 

Also note that the above lower bound is also something that we can hope to optimize without knowing what $\mu$ is. This motivates us to use policy immitation as a regularization term. $R$ is a conservative regularization weight.

Now, assume that $\mu\in \Pi$, also $\Pi$ is parameterized by a parameter vector $\theta$ and in addition $\pi$ is a smooth function in $\theta$. Then the standard policy gradient theorem tells us that there exists a policy in the neighborhood of $\mu$ that improves over $\mu$ provided that $\mu$ is not a local maxima. The maximizer of CE-IML objective resembles the natural policy gradient update which starts at $\mu$ and in fact, it magically get away with not need to know where to start.

To conclude the section, we note that this property of adaptivity in CE and CE-IML optimization implies that policy optimization might be an easier problem than policy evaluation, challenging the common wisdom that we need to know the objective function before we can optimize.

\section{Connections between IML and IPWE variance}

Many counterfactual analysis papers maximize the lower bounds of expected rewards, as they are more reliable to estimate \citep{swaminathan2015counterfactual,bottou2013counterfactual}.
Our intuition is similar.

However, we avoided direct optimization of the bounds like \cite{swaminathan2015counterfactual}:
\begin{equation}
\min_{\pi} {\rm IPWE}(\pi) + \alpha \sqrt{\mathbb{V}({\rm IPWE}(\pi))},
\end{equation}
which is further optimized by tunning $\alpha$ such that $\alpha=\lambda' \sqrt{\mathbb{V}(\rm IPWE(\pi))}$.
Instead, we connect the IPWE variance to the IML objective. Restate Theorem~\ref{thm:iml-and-ipwe-variance}

\thmIMLvariance*

\begin{proof}

First, we point out a key observation from the second-order Taylor expansion of the IML objective near $w=\frac{\pi}{\mu}\approx 1$, as
\begin{align}
	\mathbb{E}_\mu ({\rm IML})
    = -\mathbb{E}_\mu\log (w)
    &= -\mathbb{E}_\mu\log \bigl( 1+(w-1) \bigr)
    \\
    &= - \mathbb{E}_\mu \Bigl[
      (w-1)
      - \frac{1}{2} (w-1)^2
      + o\bigl((w-1)^2\bigr)
    \Bigr]
    \\
    &=\frac{1}{2} \mathbb{E}_\mu (w-1)^2 + o\bigl((w-1)^2\bigr),
    \label{eq:iml-var}
\end{align}
where $\lvert o\bigl((w-1)^2\bigr) \rvert\leq B$ is the residual and $\mathbb{E}_\mu w=1$ cancels the first-order term.

Then, suppose $\lvert r \rvert \leq R$, we relax the policy-related variance estimation
\begin{equation}
  \mathbb{V}_\mu\bigl((w-1)r\bigr)
  =\mathbb{E}_\mu\bigl((w-1)^2r^2\bigr)
  \leq \mathbb{E}_\mu(w-1)^2R^2
  \leq \Bigl( 2\mathbb{E}_\mu({\rm IML})+B \Bigr)R^2
\end{equation}

Finally, the variance $\Delta$IPWE has an additional $\frac{1}{n}$ term, because $\Delta$IPWE itself is a mean-value estimator.
\end{proof}

Note, the derived form is closely related to the variance of IPWE itself, by considering additional reward observation noise:
\begin{equation}
  \mathbb{E}_\mu (wr) = \mathbb{E}_\mu \bigl((w-1)r\bigr)
  + \mathbb{E}_\mu (r)
  \quad\Rightarrow\quad
  \mathbb{V}_\mu (wr)
  \leq 2\mathbb{V}_\mu \bigl((w-1)r\bigr) + 2\mathbb{V}_\mu (r).
\end{equation}

Comparatively, IML can be more robust, 
because it avoids influence from large probabilities.
On the other hand, IPWE/$\Delta$IPWE as a mean value can sometimes be unstable due to large inverse propensities $\frac{1}{\mu_i}$ when weight-clipping is not involved $(\tau=\infty)$.
Additionally, IML does not involve reward estimation and thus the regularization directions may be orthogonal to the policy improvements,
whereas empirical IPWE/$\Delta$IPWE may directly penalize improvements.

\begin{remark}
The Taylor expansion in Theorem~\ref{thm:iml-and-ipwe-variance} relies on bounded central moments.
This assumption may not be available in general offline learning scenarios, e.g., Example~\ref{eg:epsilon-greedy}.
However, the assumption is reasonable with variance-reduced approaches, e.g., Clipped-IPWE, IPWE-IML, and POEM\citep{swaminathan2015counterfactual}.
\end{remark}

\section{Connections between PIL-IML and natural policy descent}

Natural policy descent \citep{kakade2002natural} is a steepest descent optimization approach to solve an approximate policy optimization problem:
\begin{equation}
  \max_{\vct \theta} \mathbb{E}_\mu
  \bigl(
  r(\vct x,\vct a)
  \log \pi(\vct a\mid \vct x; \vct \theta)
  \bigr).
\end{equation}
It computes natural policy gradients (NPGs), which use the first and second-order derivatives of the objective,
\begin{equation}
  \Delta\vct \theta
  =\Bigl[
  \mathbb{E}_\mu
  \bigl(
  \nabla_{\vct\theta}
  \log\pi(\vct a\mid \vct x;\vct\theta)
  \bigr)
  \bigl(
  \nabla_{\vct\theta}
  \log\pi(\vct a\mid \vct x;\vct\theta)
  \bigr)^\top
  \Bigr]^{-1}
  \mathbb{E}_\mu
  \bigl(
  r(\vct x,\vct a)
  \nabla_{\vct\theta}
  \log\pi(\vct a\mid \vct x;\vct\theta)
  \bigr).
\end{equation}

The one-step NPG is also equivalent to the solution to the constrained optimization problem:
\begin{align}
  \argmax_{\Delta \vct\theta}
  &\;
  \mathbb{E}_\mu
  \bigl(
  \Delta \vct\theta^\top
  r(\vct x,\vct a)
  \nabla_{\vct\theta}
  \log\pi(\vct a\mid \vct x;\vct\theta)
  \bigr)
  \\
  {\rm s.t. }
  &\;
  \mathbb{E}_\mu
  \Bigl[
  \Delta \vct\theta^\top
  \bigl(
  \nabla_{\vct\theta}
  \log\pi(\vct a\mid \vct x;\vct\theta)
  \bigr)
  \bigl(
  \nabla_{\vct\theta}
  \log\pi(\vct a\mid \vct x;\vct\theta)
  \bigr)^\top
  \Delta \vct\theta
  \Bigr]
  \leq \epsilon^2,
\end{align}
where $\epsilon>0$ is the step size.

We can establish the following connections between PIL-IML and NPG.

\lemNPG*

\begin{proof}
  To show the equivalence of the objectives, we apply Taylor expansions and use the property
  \begin{equation}
    \nabla_{\vct\theta}
    \log\pi(\vct\theta)
    =
    \frac{
    \nabla_{\vct\theta}\pi(\vct\theta)
    }{
    \pi(\vct\theta)
    }
    \quad\Rightarrow\quad
    \nabla_{\vct\theta}\pi(\vct\theta)
    =
    \pi(\vct\theta)
    \nabla_{\vct\theta}
    \log\pi(\vct\theta).
  \end{equation}
  The policy-related term in the objective becomes 
  \begin{align}
    \biggl(
    \frac{
    \pi(\vct a\mid\vct x;\vct\theta_0+\Delta\vct\theta)
    }{
    \mu(\vct a\mid\vct x)
    }-1
    \biggr)
    &\approx
    \biggl(
    \frac{
    \mu(\vct a\mid\vct x) + 
    \Delta\vct\theta^\top
    \nabla_{\vct\theta}
    \pi(\vct a\mid\vct x;\vct\theta_0)
    }{
    \mu(\vct a\mid\vct x)
    }-1
    \biggr)
    \\
    &=
    \Delta\vct\theta^\top
    \nabla_{\vct\theta}
    \log\pi(\vct a\mid\vct x;\vct\theta_0)
    \label{eq:policy-improvement-log}
  \end{align}
  
  On the other hand, using the derivations in \eqref{eq:iml-var} and \eqref{eq:policy-improvement-log}, the constraint is equivalent to
  \begin{align}
    \mathbb{E}
    \Bigl(
      {\rm KL}
      \bigl(
      \mu(\vct a\mid\vct x)
      \,\|\,
      \pi(\vct a\mid\vct x;\vct\theta_0+\Delta\vct\theta)
      \bigr)
      \Bigr)
    &\approx
    \mathbb{E}
    \biggl(
    \frac{
    \pi(\vct a\mid\vct x;\vct\theta_0+\Delta\vct\theta)
    }{
    \mu(\vct a\mid\vct x)
    }-1
    \biggr)^2
    \\
    &=
    \mathbb{E}
    \bigl(
    \Delta\vct\theta^\top
    \nabla_{\vct\theta}
    \log\pi(\vct a\mid\vct x;\vct\theta_0)
    \bigr)^2.
  \end{align}
  Now, both the objective and the constraint are in the same form as NPG.
\end{proof}


\section{IML strictly positive due to confounding variables}

\label{sec:iml-diagnosis}
\lemIMLdiagnosis*

\begin{proof}
    Without loss of generality, assume $x_1=\emptyset$ and $x=x_2$. We can express the objective as
    \begin{equation}
        \mathbb{E}{\rm KL}(\mu\|\pi)
        = \int\! p(x) \sum_{a}\mu(a\mid x) \log \frac{\mu(a\mid x)}{\pi(a)} {\rm\, d}x
        = {\rm CE}(\pi, \mathbb{E}\mu(\cdot \mid x)) - \mathbb{E} H(\mu)
    \end{equation}
    which is maximized by $\pi(a) = \mathbb{E}\mu(a\mid x) = \int\! p(x)\mu(a\mid x) {\rm\, d}x$.
    
    Then we plug in the solution. Define $p_\mu = p(x)\mu(a\mid x)$, the solution becomes $\hat\pi(a) = \mathbb{E}\mu(a\mid x)=p_\mu(a)$ and the objective is
    \begin{equation}
        \mathbb{E}{\rm KL}(\mu\|\hat\pi)
        = \int\! p_\mu(x,a) \log \frac{p_\mu(a\mid x)}{p_\mu(a)} {\rm\, d}x{\rm\, d}a
        = \int\! p_\mu(x,a) \log \frac{p_\mu(a, x)}{p_\mu(a) p_\mu(x)} {\rm\, d}x{\rm\, d}a
        = I_\mu(a;x),
    \end{equation}
    where $I_\mu(a;x)$ is the mutual information due to the logging policy between $a$ and $x$, the variable that cannot be explained by the new policy.
\end{proof}

\section{Entropy increases with IML policies due to confounding variables}

\label{sec:entropy-increase-proof}
\thmEntropy*

\begin{proof}
Starting from the left hand side and using the provided equation for $\pi$,
\begin{align}
  \mathbb{E}H(\pi)-\mathbb{E}H(\mu)
  =\int
  \biggl[
    &-\sum_{\vct a}
      \pi(\vct a\mid \vct x_1)
      \log\pi(\vct a\mid\vct x_1)
    \\
    &+\int\! 
      \sum_{\vct a}
      \mu(\vct a\mid \vct x)
      \log\mu(\vct a\mid\vct x)
      p(\vct x_2\mid\vct x_1)
      {\rm \, d}\vct x_2
  \biggr]
  p(\vct x_1)
  {\rm \, d}\vct x_1
  \\
  =\int
  \biggl[
    &-\sum_{\vct a}\int\!
      \mu(\vct a\mid \vct x)
      p(\vct x_2\mid\vct x_1)
      {\rm \,d}\vct x_2
      \log\pi(\vct a\mid\vct x_1)
      \label{eq:replace}
    \\
    &+\int\!
      \sum_{\vct a}
      \mu(\vct a\mid \vct x)
      \log\mu(\vct a\mid\vct x)
      p(\vct x_2\mid\vct x_1)
      {\rm \, d}\vct x_2
  \biggr]
  p(\vct x_1)
  {\rm \, d}\vct x_1.
\end{align}

With the key observation that $\pi(\vct a\mid \vct x_1)=\pi(\vct a\mid \vct x)$ is independent of $(\vct x_2\mid\vct x_1)$, 
we may replace the summation and the integral in \eqref{eq:replace}, which yields
\begin{align}
  \mathbb{E}H(\pi)-\mathbb{E}H(\mu)
  &=\int
  \biggl[
    -\int\!\sum_{\vct a}
      \mu(\vct a\mid \vct x)
      \log\pi(\vct a\mid\vct x)
      p(\vct x_2\mid\vct x_1)
      {\rm \,d}\vct x_2
    \\
  &\phantom{=\int\biggl[}
    +\int\!
      \sum_{\vct a}
      \mu(\vct a\mid \vct x)
      \log\mu(\vct a\mid\vct x)
      p(\vct x_2\mid\vct x_1)
      {\rm \, d}\vct x_2
  \biggr]
  p(\vct x_1)
  {\rm \, d}\vct x_1
  \\
  &=\int
    \sum_{\vct a}
      \mu(\vct a\mid \vct x)
      \log\biggl(\frac{
        \mu(\vct a\mid\vct x)
        }{
        \pi(\vct a\mid\vct x)
        }
      \biggr)
      p(\vct x)
      {\rm \,d}\vct x
  \\
  &=\mathbb{E} \bigl({\rm KL}(\mu\|\pi)\bigr).
  \nonumber
\end{align}
\end{proof}

\section{Doubly robust estimation}
\label{sec:dr}



DR fixes the IPWE large variance without introducing biases.
Instead, it uses two equivalent ways to find the expectation 
$\mathbb{E}_\pi[\hat{f}\mid x] = \mathbb{E}_\mu[\frac{\pi}{\mu}\hat f \mid x]=\sum_a \hat f(x,a)$
of a known function $\hat{f}$ that can be evaluated for any action, as
\begin{equation}
\max_\pi {\rm DR}(\pi) 
	= \frac{1}{n}
	\sum_{i=1}^n
	\frac{\pi(\vct a\mid\vct x_i)}{\mu_i}
    \bigl( r_i-\hat{f}(\vct x_i,\vct a_i) \bigr) 
    + \frac{1}{n}\sum_{i=1}^n \sum_{\vct a\in\mathcal{A}(\vct x_i)} \pi(\vct a\mid\vct x_i)\hat{f}(\vct x_i,\vct a).
    \label{eq:dr}
\end{equation}

DR uses a Q-learning estimator $\hat{f}$ to reduce the variance in the first term. Its optimization often yields at least as good performance as either Q-learning or IPWE, as long as Q-learning has positive correlation with the true rewards.
However, vanilla DR can perform worse in extreme cases, as noticed by \cite{kang2007demystifying}.

We extend PIL-IML to DR, given that the probabilities are logged for every action candidate, $\mu(\vct a\mid\vct x_i),\forall\vct a\in\mathcal{A}(\vct x)$, including the non-chosen ones. The PIL-DR is
\begin{equation}
\max_\pi {\rm DR}_{\tau}(\pi) =
\frac{1}{n}
\sum_{i=1}^n 
\bar{w}_i
    \bigl(r_i - \hat{f}(\vct x_i,\vct a_i)\bigr)
    \label{eq:clipped-dr-2}
    +
    \frac{1}{n}
    \sum_{i=1}^n 
    \sum_{\vct a\in\mathcal{A}(\vct x_i)} \bar{\pi}(\vct a\mid\vct x_i)
    \hat{f}(\vct x_i,\vct a)
    + \bar R(\pi),
\end{equation}
where $\bar\pi=\bar{w}\mu$ is the lower-bounded policy induced by the lower-bounded weights and $\bar{R}(\pi)=\sum_{i=1}^n(w_i-\bar w_i)r_i$ reflects the bound error.
With nonnegative rewards, we optimize the lower-bound of DR by removing the $\bar{R}(\pi)\geq0$ term.

\section{Reproducing Criteo counterfactual analaysis}
\label{app:criteo}

\begin{table}[h]
\centering
\caption{Reproducing Criteo counterfactual analysis \citep{lefortier2016large}}
\label{tab:reproduce}
\begin{tabular}{
  l
  S[table-format=2.1(3)]
  S[table-format=3.1(2)]
  }
\toprule
{ Approach} & { IPWE ($\times 10^4$)} & { Gap (\%)} \\
\midrule
{ Logging policy} & 53.4 & 0.0  \\
{ Uniform random} & 44.7 \pm 2.1 & 1.7 \pm 2.1   \\
{ DM/Q-learning } & 49.6 \pm 1.7 & -0.5 \pm 2.0   \\
{ IPWE \citep{horvitz1952generalization} } & 49.9 \pm 1.8 & 0.1 \pm 1.6   \\
{ DR } & 53.7 \pm 14.4 & -1.5 \pm 5.7   \\
{ POEM \citep{swaminathan2015counterfactual} } & 52.7 \pm 1.6 & 0.3 \pm 0.6   \\
\bottomrule
\end{tabular}
\end{table}

A rerun of the provided scripts by \cite{lefortier2016large} yielded results in Table~\ref{tab:reproduce}.
None of the approaches outperformed the logging policy using the released models and features, likely due to the existence of confounding variables and model misspecifications.
Moreover, the confidence intervals are much smaller than their true values, as reflected in the inconsistency across data splits which is discussed in the main text and also show in Table~\ref{tab:criteo-reproduce2}\&\ref{tab:criteo-reproduce3}.
This is likely the combined results of fat-tailed importance weight distributions and a lack of weight-clipping in evaluations.

\begin{table}[h]
\centering
\caption{Reproduce Criteo counterfactual dataset large variance (DM/Q-Learning)}
\label{tab:criteo-reproduce2}
\begin{tabular}{
  c
  c
  S[table-format=2.1(3)]
  S[table-format=3.1(3)]
  c
  c
  c
  }
\toprule
{Validation Rank} & {Test Rank} & {Validation IPWE ($\times 10^4$)} & {Test IPWE ($\times 10^4$)} & {P} & {L} & {Lambda} \\
\midrule
1 & 2 & 52.5\pm3.1 & 49.6\pm1.7 & 1 & 0.1 & 0 \\
2 & 1 & 49.2\pm3.4 & 53.5\pm14.2 & 1 & 1 & 0  \\
3 & 14 & 47.4\pm7.0 & 41.0\pm4.4 & 1 & 0.1 & 1E-8	\\
\bottomrule
\end{tabular}
\end{table}

~

\begin{table}[h]
\centering
\caption{Reproduce Criteo counterfactual dataset large variance (IPS/IPWE)}
\label{tab:criteo-reproduce3}
\begin{tabular}{
  c
  c
  S[table-format=2.1(3)]
  S[table-format=3.1(3)]
  c
  c
  c
  }
\toprule
{Validation Rank} & {Test Rank} & {Validation IPWE ($\times 10^4$)} & {Test IPWE ($\times 10^4$)} & {P} & {L} & {Lambda} \\
\midrule
1 & 5.5 & 53.8\pm3.6 & 49.9\pm1.8 & 0 & 10 & 1E-8 \\
2 & 5.5 & 53.8\pm3.6 & 49.9\pm1.8 & 0 & 10 & 0  \\
3 & 4   & 53.8\pm3.6 & 50.0\pm1.8 & 0 & 10 & 1E-6	\\
10 & 1  & 50.8\pm2.1 & 53.7\pm3.0 & 0.5 & 1 & 0	\\
\bottomrule
\end{tabular}
\end{table}

\section{Subsampling bootstrap}
\label{app:bootstrap}

Subsampling bootstrap \citep{politis1994large,geyer20065601} is a statistical approach to estimate confidence intervals with minimal assumptions. It is especially suitable for heavy-tail distributions or dependent samples. Our goal is to estimate asymptotic quantiles on independent sample draws from heavy-tail distributions.

The key idea is to extend central limit theorem (CLT) convergence properties to a more general 
\begin{equation}
    n^{-\beta}(T_n(\theta)-\theta)\stackrel{\rm dist.}{\to} F,
    \label{eq:subsample}
\end{equation}
where $T_n(\theta)$ is the finite-sample estimator of parameter $\theta$, $\beta>0$ is the generalized convergence rate, and $F$ is any limiting distribution.
A trivial special case is CLT with $\beta=0.5$.
A nontrivial example where $\beta\neq0.5$ may be sample max estimator for the parameter of a uniform distribution, i.e., $T_n(\theta)=\max(X_1,\dots,X_n)$ where $X\sim{\rm Unif}(0, \theta)$.
In this case, simple algebra shows $n(\theta-T_n(\theta))\stackrel{dist.}{\to} {\rm Exp}(\theta)$, i.e., $\beta=1$.
There are also examples where $\beta<0.5$ for robust regression estimators.
Additionally, the mean of a Cauchy distribution does not exist, so $\beta\geq0$ for the sample mean estimator.
(The median estimator does converges to its mode parameter at $\beta=0.5$.)
The following approach has been verified with both uniform and Cauchy distributions.

We are interested in finding the correct rate of $\beta$ for the importance weight distribution, e.g., between the uniform policy and the logging policy, from Criteo counterfactual-analysis dataset.
Shown in Figure~\ref{fig:IPWE-weight} of the main text, the convergence rate of the raw weights (without clipping) must follow $\beta<0.5$.
We regressed $\beta$ using subsampling bootstrap with varying size $0\ll b\ll n$, where for each size, we bootstrapped $K=\num{10000}\sim\num{100000}$ subsamples and recorded the distribution from the resulting estimators, $S_b = \{T_b^{(k)}:k=1,\dots,K\}$.
Let $Q_{q}(S_b)$ be the distribution quantile at $q$,
\eqref{eq:subsample} suggests that \begin{equation}
    b^{-\beta}(Q_{q}(S_b)-T_n(\theta))
    \to
    b^{-\beta}(Q_{q}(S_b)-\theta)
    \to F^{-1}(q),
\end{equation}
where the difference $|T_n(\theta)-\theta|$ converges to zero at a faster rate $O(n^{-\beta}) < O(b^{-\beta})$ and can be ignored.
For this purpose, we took $n^{0.5}\leq b\leq n^{0.75}$.
Since the data is iid, we used subsampling with replacement to increase speed.

\begin{figure}[ht]
    \centering
    \begin{subfigure}[t]{0.48\textwidth}
        \centering
        \includegraphics[height=1.2in]{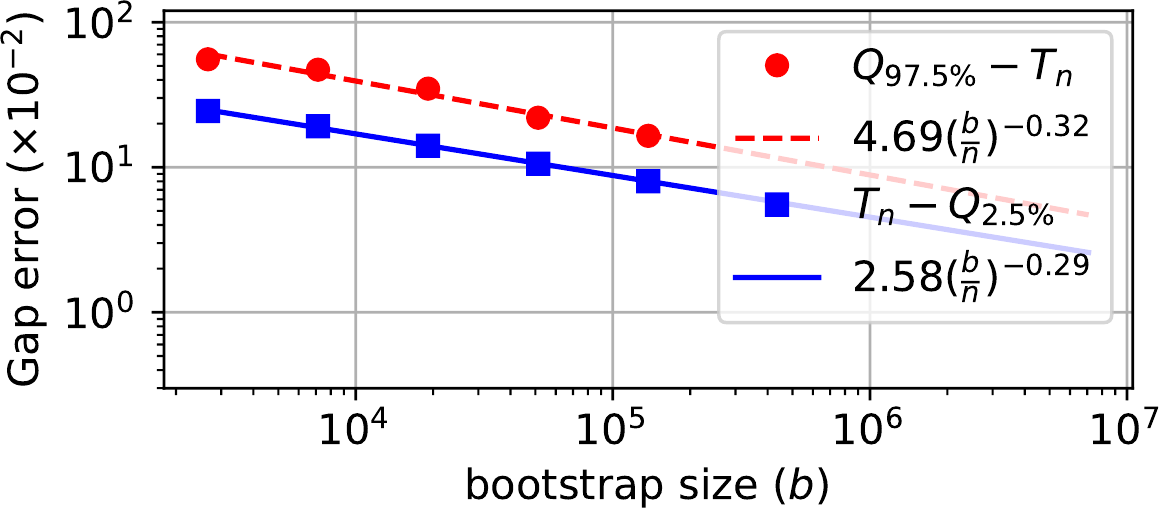}
        \caption{Slower rates without weight clipping.}
    \end{subfigure}%
    ~ 
    \begin{subfigure}[t]{0.48\textwidth}
        \centering
        \includegraphics[height=1.2in]{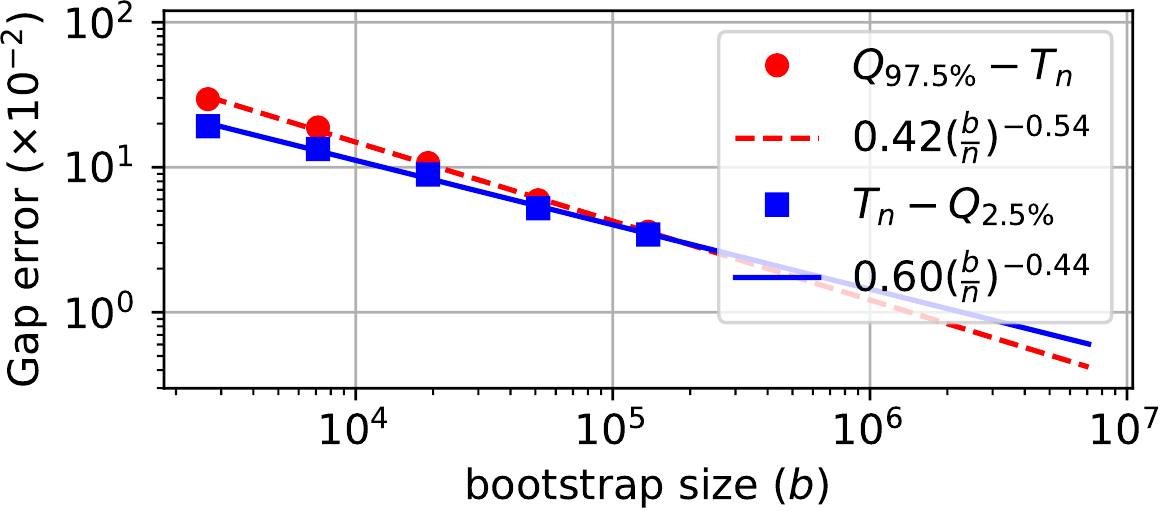}
        \caption{Faster rates with weight clipped (at 500).}
    \end{subfigure}
    \caption{Error bounds of the self-normalization Gap estimator, as a function of subsample size.
    Using subsampling bootstrap, the final error can be extrapolated with the correct rates.}
    \label{fig:sub_test-gap}
\end{figure}

Figure~\ref{fig:sub_test-gap} shows the log-log plots for the error bounds in the self-normalization estimator $T_b=\frac{1}{b}\sum_{i=1}^b \frac{\pi_{i_k}}{\mu_{i_k}}$, where $\{i_k:k=1,\dots,b\}$ is one subsample.
Without weight-clipping, the estimator converged at a slower rate $\beta\approx0.3$. With weight clipped at 500, the convergence rate was around $\beta=0.5$.
The final self-normalization quantity could be extrapolated as $(98.3-2.6, 98.3+4.7) = (95.7, 103.0)\%$ without clipping, i.e., 7.3\% estimation error.
On the other hand, weight clipping introduced 7$\sim$8\% bias,
which is on a similar scale, but significantly improves stability.
Intuitively, one click should neither be weighed more than 500 times.

\begin{figure}[ht]
    \centering
    \begin{subfigure}[t]{0.48\textwidth}
        \centering
        \includegraphics[height=1.2in]{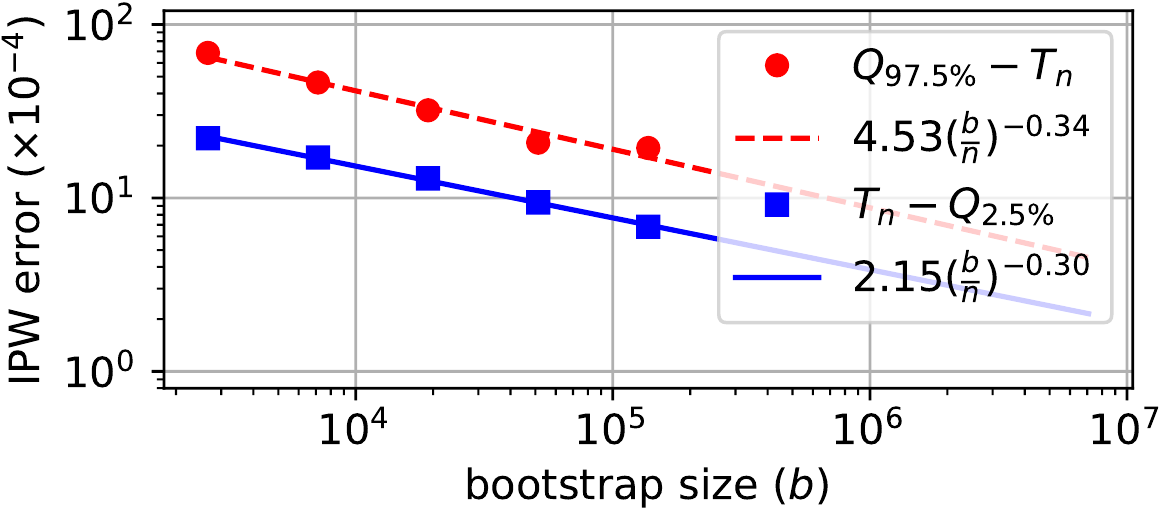}
        \caption{Slower rates without weight clipping.}
    \end{subfigure}%
    ~ 
    \begin{subfigure}[t]{0.48\textwidth}
        \centering
        \includegraphics[height=1.2in]{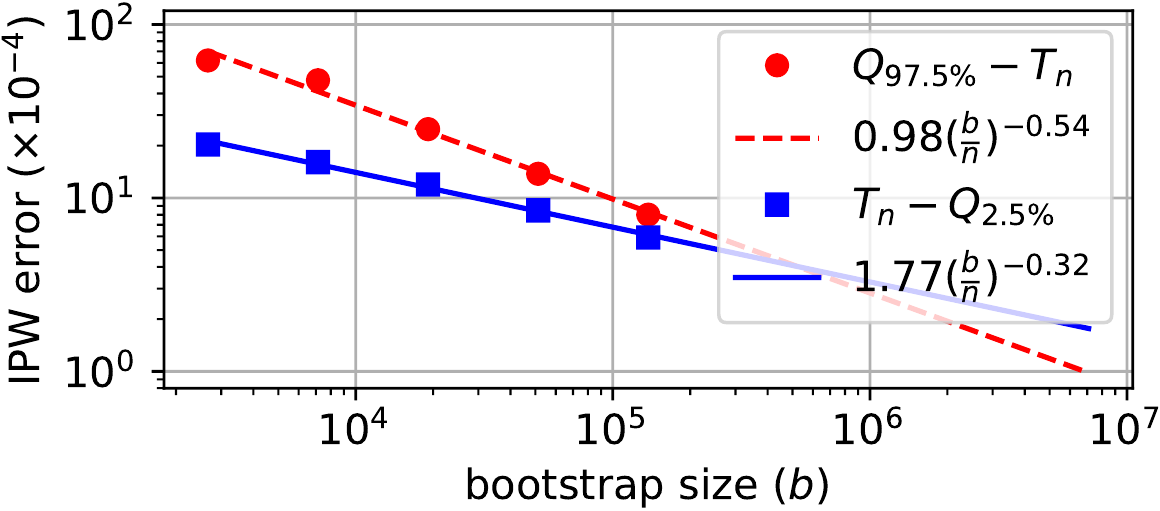}
        \caption{Faster rates with weight clipped (at 500).}
    \end{subfigure}
    \caption{Error bounds of IPWE as a function of subsample size. Using subsampling boostrap, the final error can be extrapolated with the correct rates.}
    \label{fig:sub_test-ipwe}
\end{figure}

Figure~\ref{fig:sub_test-ipwe} shows a similar story that weight clipping could improve convergence quality and yield tighter IPWE error bounds.
While most of the rate improvements were notable,
one particular interesting point is that weight-clipping at 500 did not seem to help the lower-bound estimation of IPWE, which still had a convergence rate at $\beta=0.32$.
One possible explanation could be that $\tau=500 \approx \sqrt{n\mathbb{E}R}$, i.e., square-root of the total number of clicks, which still left the worst-case IPWE variance potentially unbounded.

\begin{table}[ht]
    \caption{Offline estimates combining both IPWE and Gap errors. Weight clip contributed to smaller confidence intervals in the final offline estimates, without being overly pessimistic.}
    \label{tab:criteo-unif}
    \centering
    \begin{tabular}{c c c c c}
    \toprule
     Weight Clip  & Type & IPWE w/ Clip & Self-Normalization   & Final Offline Estimate \\
    $\tau$ &      &  $\times 10^4$ & 100\% & $\times 10^4$ \\
    \midrule
     \multirow{2}{*}{$\infty$} & Mean & 44.7         & 98.3          & 45.6  \\
                               & CI   & (42.5, 49.2) & (95.7, 103.0) & (38.2, 52.2) \\
    \midrule
     \multirow{2}{*}{500}      & Mean & 43.6         & 92.6          & 47.3 \\
                               & CI   & (41.8, 44.6) & (92.0, 93.0)  & (41.8, 52.6) \\
    \bottomrule
    \end{tabular}
\end{table}

The final offline estimator should include both IPWE and self-normalization Gap errors \citep{bottou2013counterfactual}. Since the global click rate is around 0.5\%, we may assume that the expected clicks in any self-normalization Gaps to be within $0\sim1\%$, i.e., at most twice their global average. Table~\ref{tab:criteo-unif} and Table~\ref{tab:criteo} in the main text both report the offline estimates using the following estimator:
\begin{equation}
    \mbox{Offline estimation} = {\rm IPWE} + {\rm Gap } R {\rm \; s.t. \; } 0<R<0.01.
\end{equation}

Judging by the amount of self-normalization Gaps, we found that clipped estimators to also be useful in (greedy application) of Q-learning (aka. direct method).
On the other hand, IML-based methods, such as IPWE$_\tau$, PIL, and POEM, tend to have lighter tails. We applied weight clipping to all methods for fairness, while the light-tail distributions tend to suffer smaller clipping biases.

\section{Second-Order Model and Implementation}

The second-order model which helped us reduce the IML Gap from 0.40 to 0.35 still follows an exponential family with potential function $\log\pi(a\mid x)\simeq\phi(x,a)$, but the potential scores become second-order:
\begin{equation}
    \phi(x,a) = x^\top UV^\top a + w^\top a,
\end{equation}
where $w\in\mathbb{R}^p$ is the first-order coefficient vector and $U,V\in\mathbb{R}^{p\times r}$ are second-order coefficient matrices.
We took $r=256$.
We also experimented with deeper models and nonlinear activation functions, but these approaches did not seem to improve IML much further.

While the IML loss improves 10\%, the offline click rate improved much less, around $0.4\times10^4$.
We used second-order models when available, except for the original first-order POEM for convenience. We believe its inferior performance may be partially due to lack of model depth as well as a possibility to yield policies that are more saturated, which may lead to inferior generalization.
It is common for a policy to become much more complex to yield minimal improvements, but part of our conclusion also suggests keeping simpler models to improve exploration properties.

\section{Results on other UCI datasets}
\label{app:uci}

Results on UCI datasets largely follow similar patterns in Figure~\ref{fig:uci}, with varying levels of benefits from variance reduction techniques.
The effectiveness of clipped-IPWE, clipped-DR, and IPWE-IML depends on the amount of misspecification that we can artificially inject using rank-2 models (which underfit the problems).
When the differences are small, we also observe that rank-2 models are sufficient to imitate the logging policy with near-zero IML losses.
It often implies the true models are considerably simpler, the biases are considerably smaller, and the optimal strategy might still be Q-learning.
The only negative evidence is from wdbc dataset, where the original problem was to identify benign against malign cancers but we modified the problem to classification of the actual cancer type, which are noisier labels.

\begin{figure*}[ht]
  \centering
  \begin{subfigure}[b]{0.48\linewidth}
    \centering\includegraphics[width=0.8\linewidth]{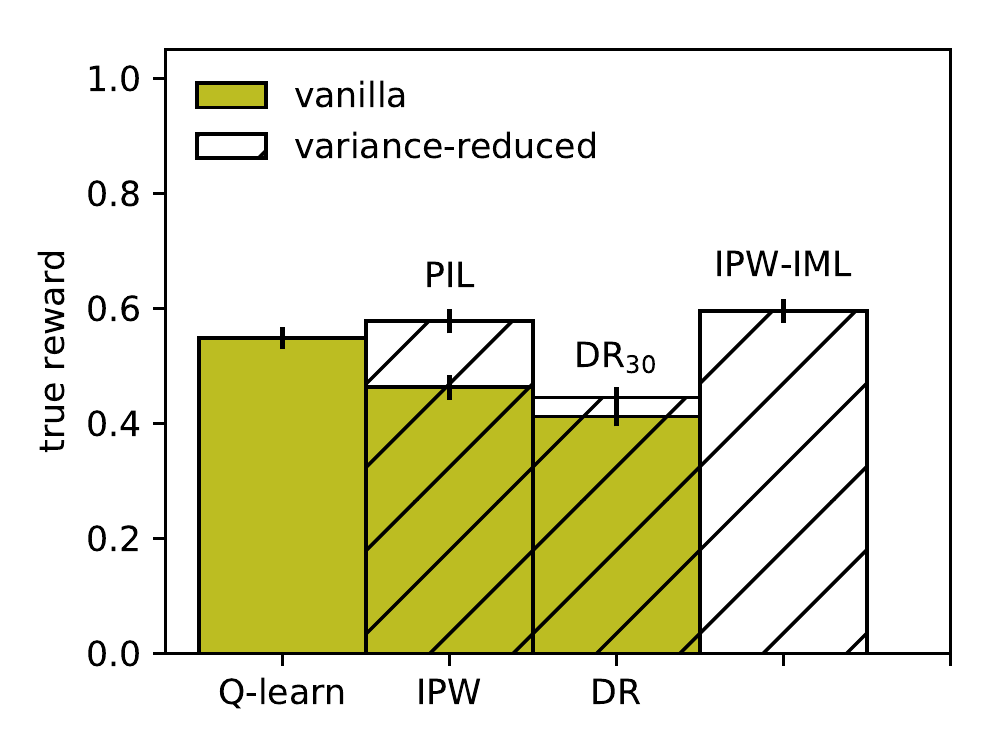}
    \caption{Variance reduction compared with vanilla methods.}
  \end{subfigure}\hspace{1.5em}%
  \begin{subfigure}[b]{0.48\linewidth}
    \centering\includegraphics[width=0.8\linewidth]{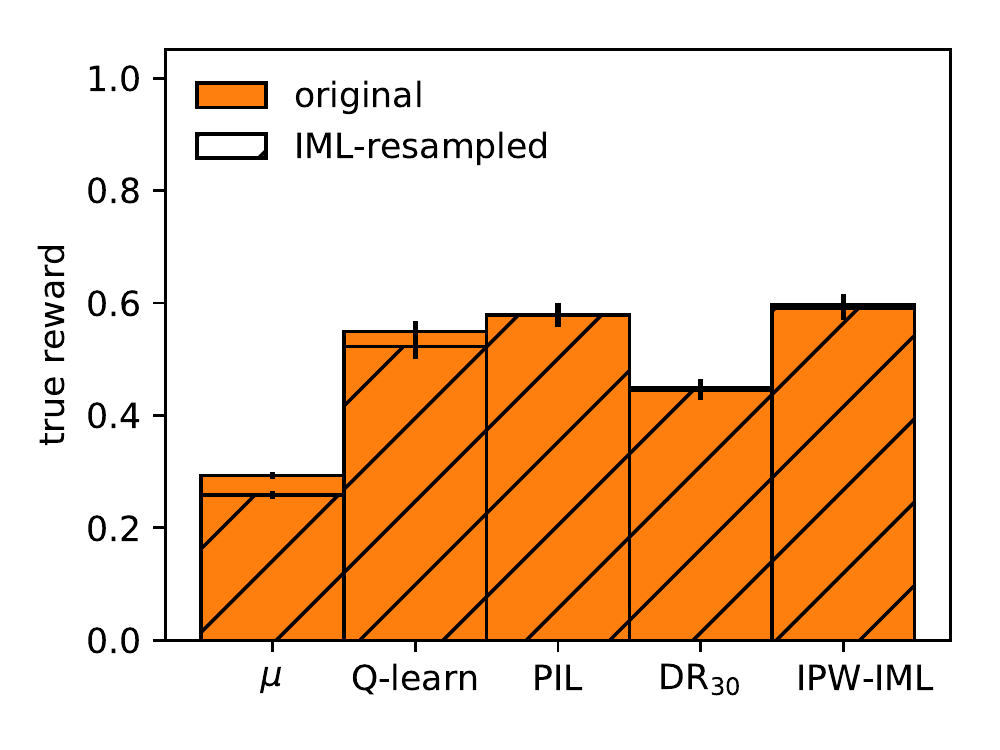}
    \caption{Online application of IML-resampling}
  \end{subfigure}\hspace{1.5em}%
  \caption{Multiclass-to-bandit conversion on UCI ecoli dataset.}
\end{figure*}

\begin{figure*}[ht]
  \centering
  \begin{subfigure}[b]{0.48\linewidth}
    \centering\includegraphics[width=0.8\linewidth]{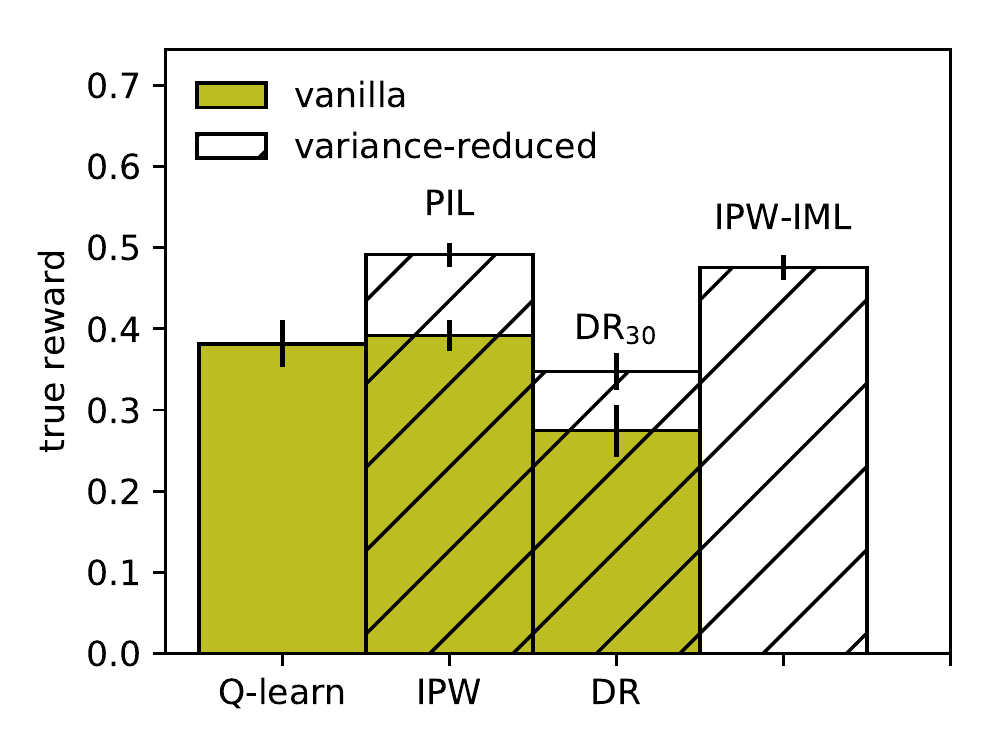}
    \caption{Variance reduction compared with vanilla methods.}
  \end{subfigure}\hspace{1.5em}%
  \begin{subfigure}[b]{0.48\linewidth}
    \centering\includegraphics[width=0.8\linewidth]{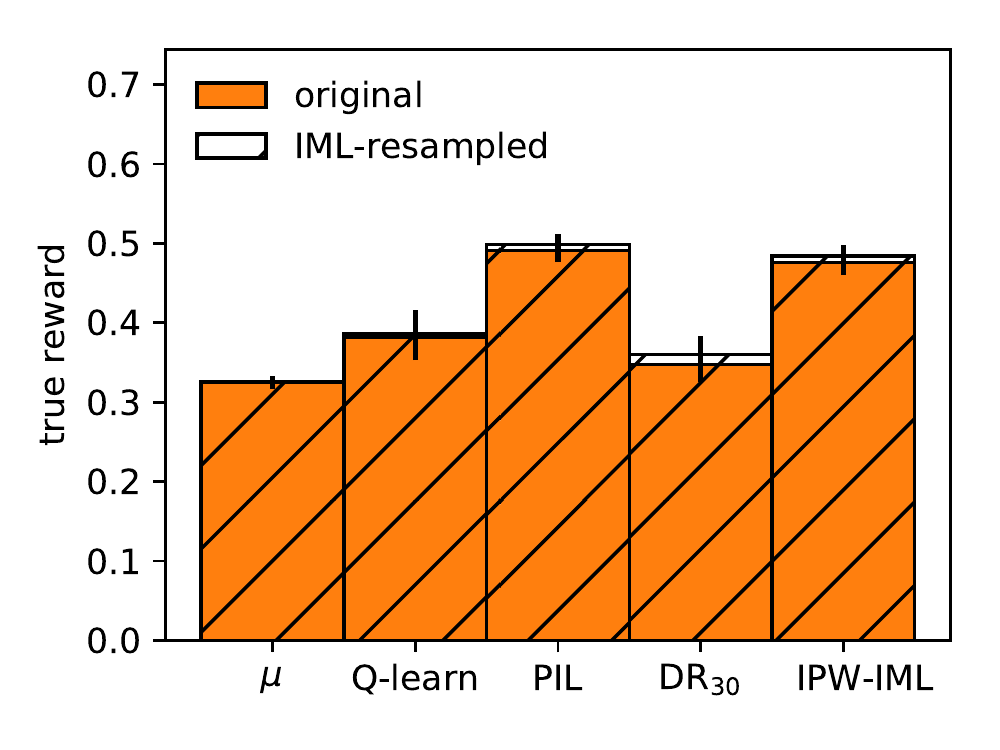}
    \caption{Online application of IML-resampling}
  \end{subfigure}\hspace{1.5em}%
  \caption{Multiclass-to-bandit conversion on UCI glass dataset.}
\end{figure*}

\begin{figure*}[ht]
  \centering
  \begin{subfigure}[b]{0.48\linewidth}
    \centering\includegraphics[width=0.8\linewidth]{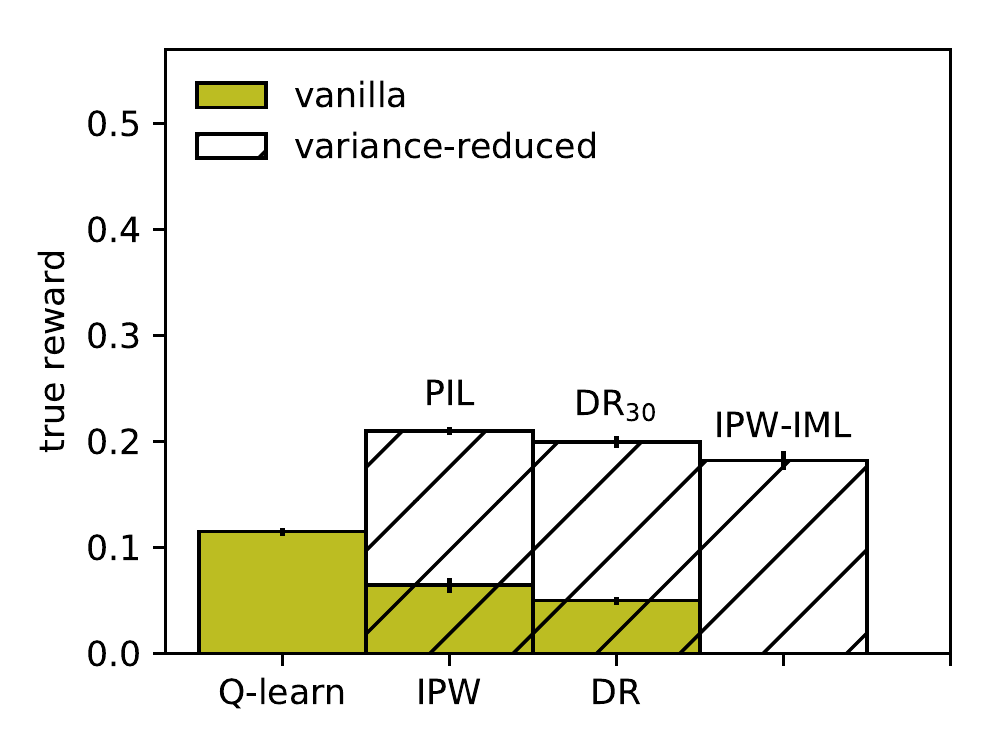}
    \caption{Variance reduction compared with vanilla methods.}
  \end{subfigure}\hspace{1.5em}%
  \begin{subfigure}[b]{0.48\linewidth}
    \centering\includegraphics[width=0.8\linewidth]{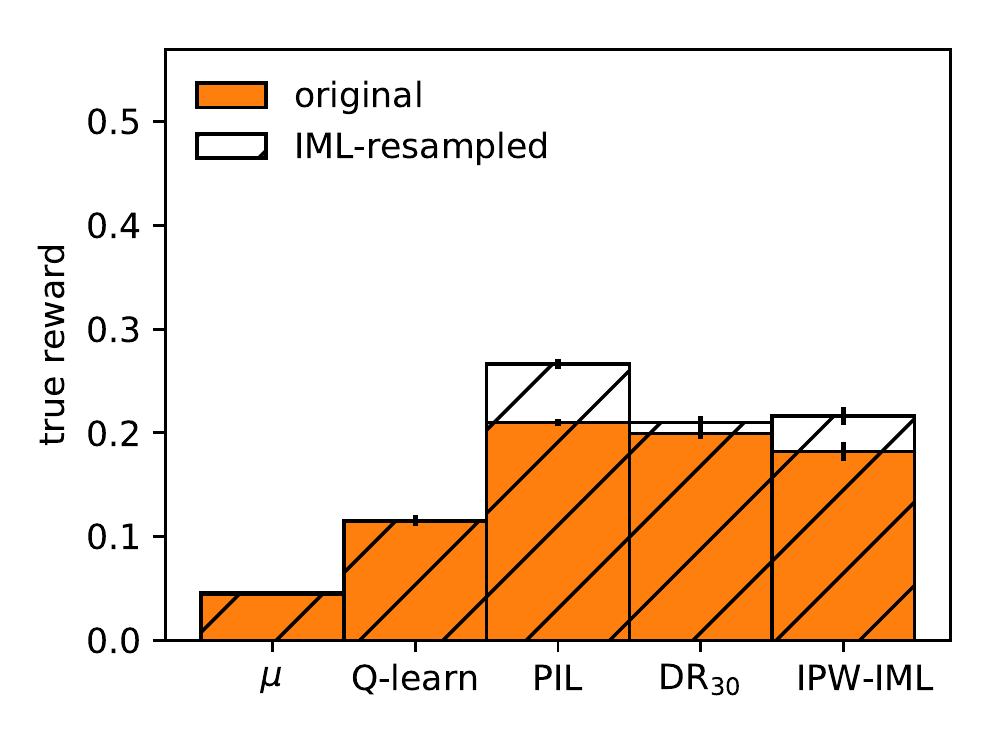}
    \caption{Online application of IML-resampling}
  \end{subfigure}\hspace{1.5em}%
  \caption{Multiclass-to-bandit conversion on UCI letter dataset.}
\end{figure*}

\begin{figure*}[ht]
  \centering
  \begin{subfigure}[b]{0.48\linewidth}
    \centering\includegraphics[width=0.8\linewidth]{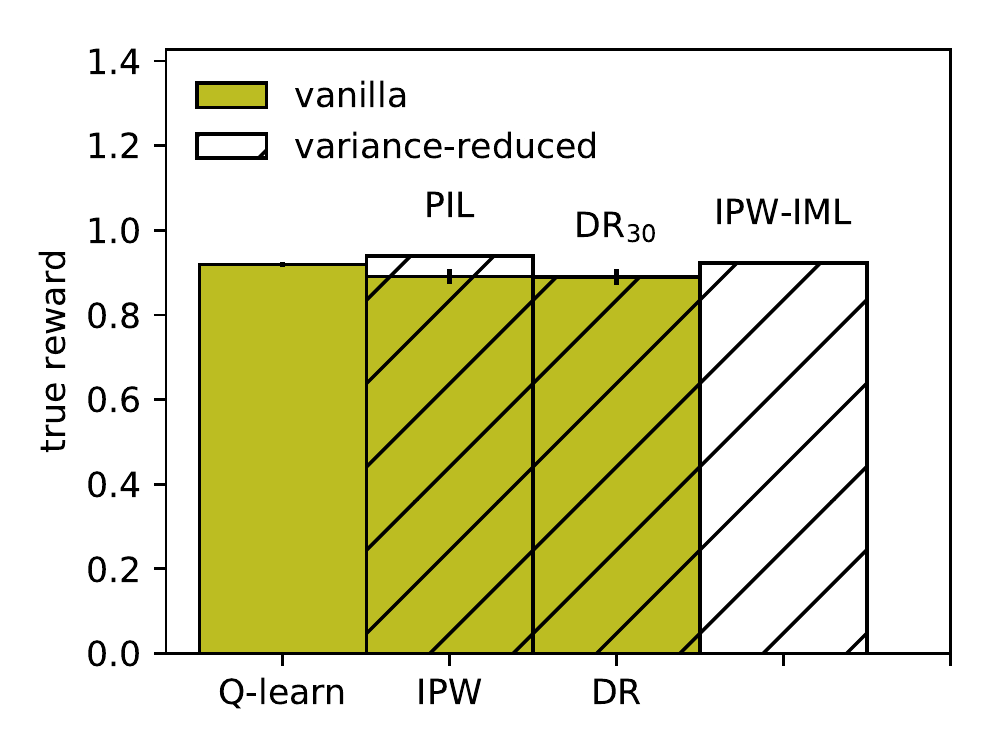}
    \caption{Variance reduction compared with vanilla methods.}
  \end{subfigure}\hspace{1.5em}%
  \begin{subfigure}[b]{0.48\linewidth}
    \centering\includegraphics[width=0.8\linewidth]{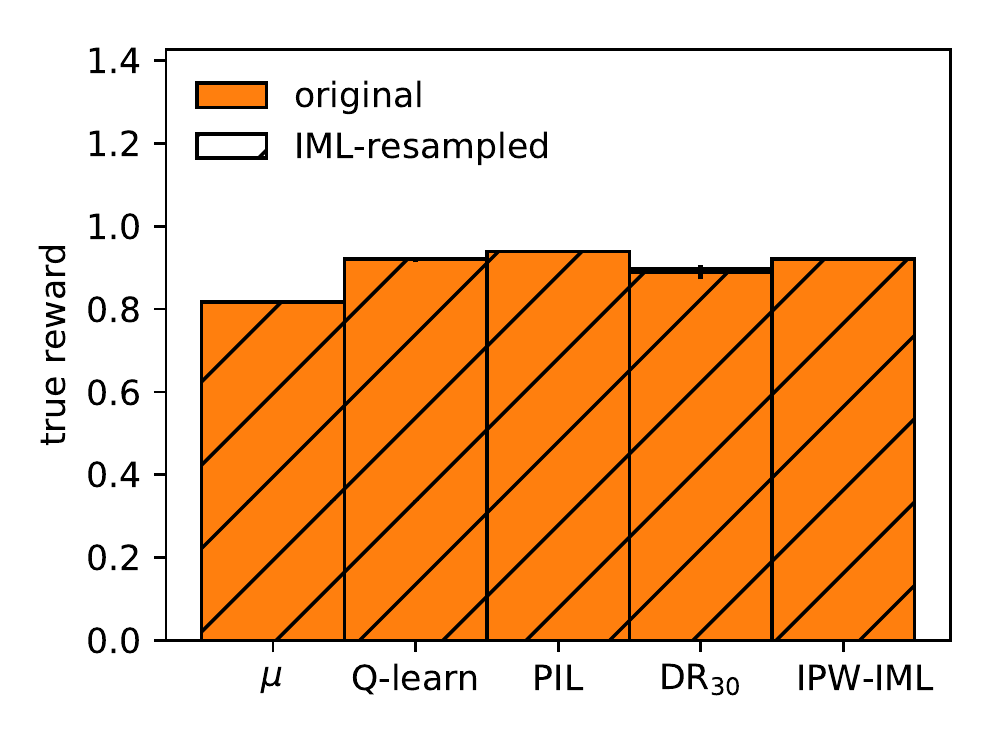}
    \caption{Online application of IML-resampling}
  \end{subfigure}\hspace{1.5em}%
  \caption{Multiclass-to-bandit conversion on UCI page-blocks dataset.}
\end{figure*}

\begin{figure*}[ht]
  \centering
  \begin{subfigure}[b]{0.48\linewidth}
    \centering\includegraphics[width=0.8\linewidth]{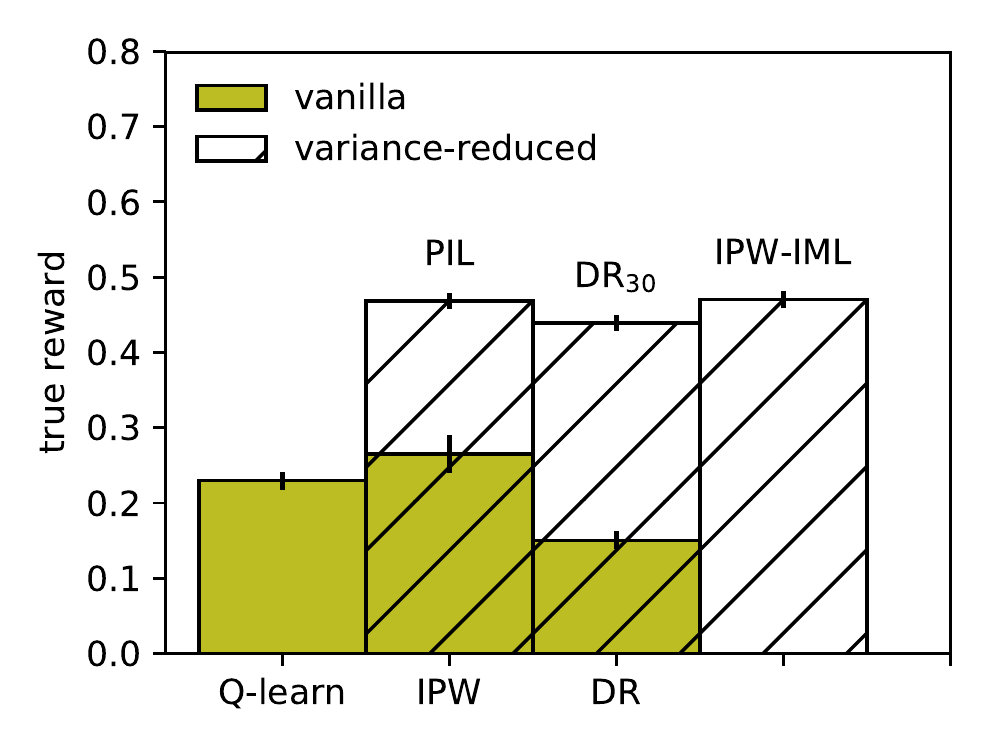}
    \caption{Variance reduction compared with vanilla methods.}
  \end{subfigure}\hspace{1.5em}%
  \begin{subfigure}[b]{0.48\linewidth}
    \centering\includegraphics[width=0.8\linewidth]{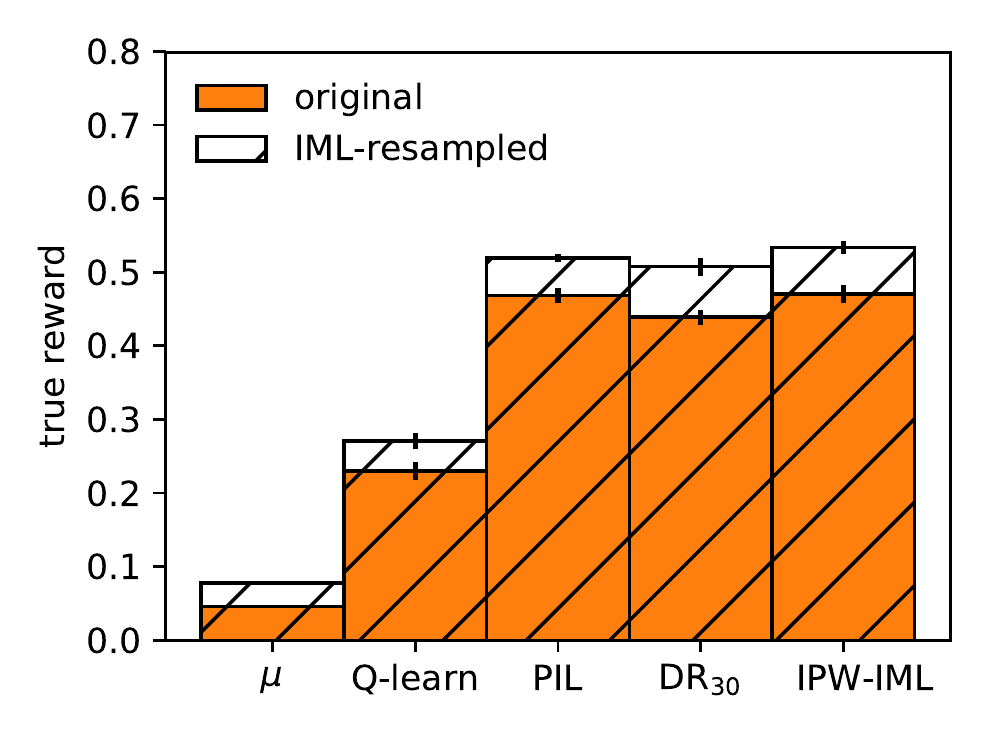}
    \caption{Online application of IML-resampling}
  \end{subfigure}\hspace{1.5em}%
  \caption{Multiclass-to-bandit conversion on UCI pendigits dataset.}
\end{figure*}

\begin{figure*}[ht]
  \centering
  \begin{subfigure}[b]{0.48\linewidth}
    \centering\includegraphics[width=0.8\linewidth]{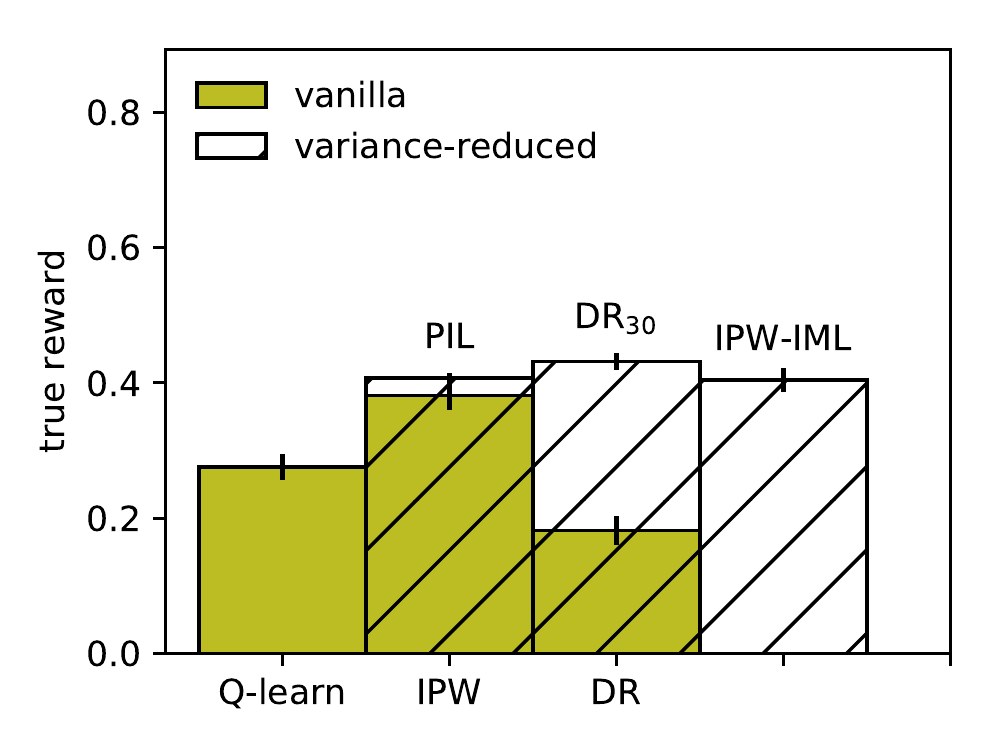}
    \caption{Variance reduction compared with vanilla methods.}
  \end{subfigure}\hspace{1.5em}%
  \begin{subfigure}[b]{0.48\linewidth}
    \centering\includegraphics[width=0.8\linewidth]{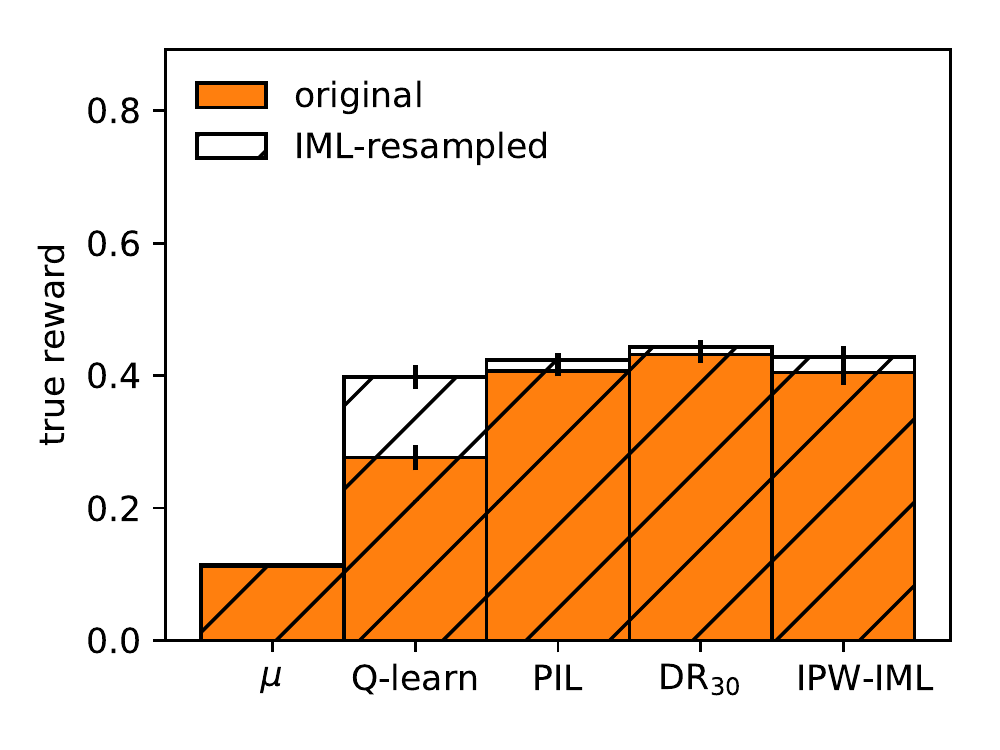}
    \caption{Online application of IML-resampling}
  \end{subfigure}\hspace{1.5em}%
  \caption{Multiclass-to-bandit conversion on UCI satimage dataset.}
\end{figure*}

\begin{figure*}[ht]
  \centering
  \begin{subfigure}[b]{0.48\linewidth}
    \centering\includegraphics[width=0.8\linewidth]{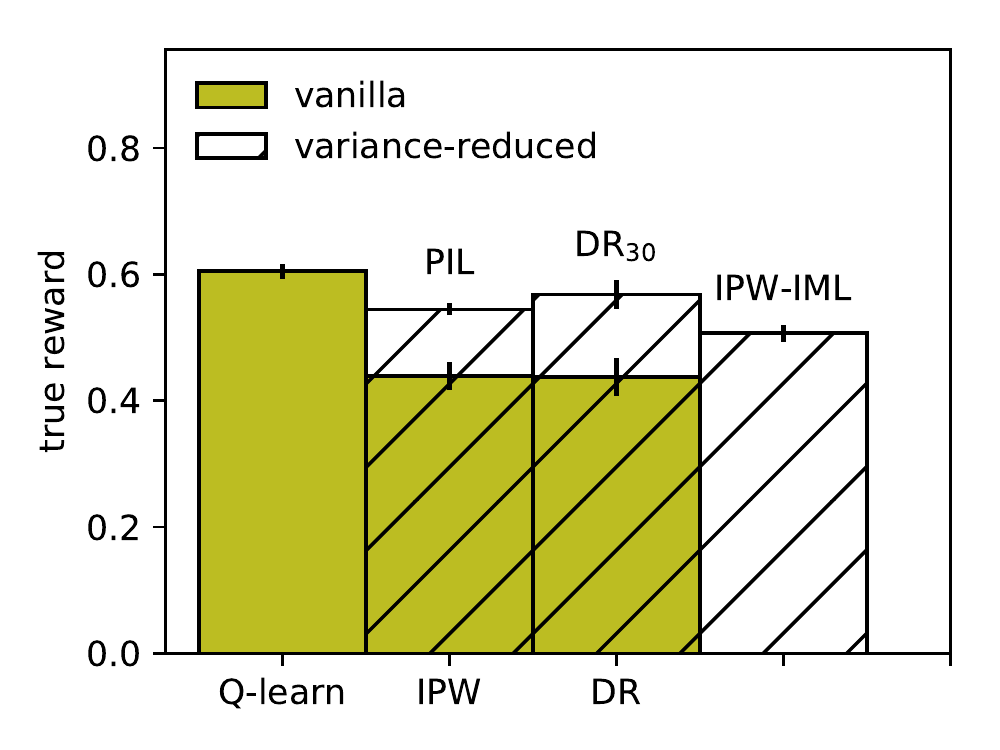}
    \caption{Variance reduction compared with vanilla methods.}
  \end{subfigure}\hspace{1.5em}%
  \begin{subfigure}[b]{0.48\linewidth}
    \centering\includegraphics[width=0.8\linewidth]{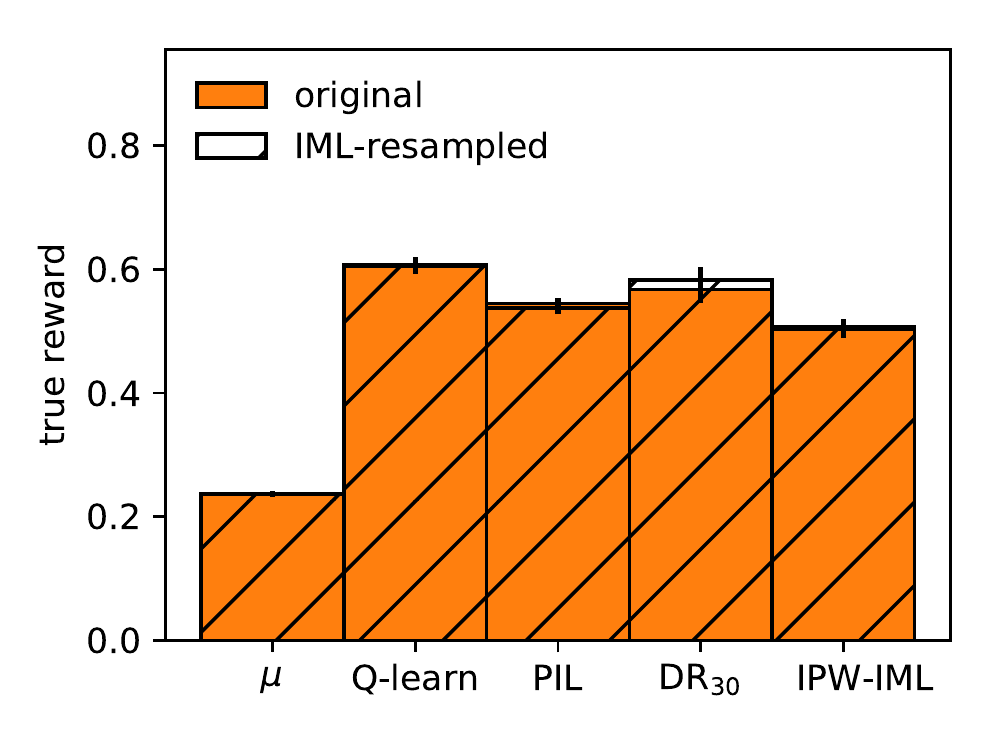}
    \caption{Online application of IML-resampling}
  \end{subfigure}\hspace{1.5em}%
  \caption{Multiclass-to-bandit conversion on UCI vehicle dataset.}
\end{figure*}

\begin{figure*}[ht]
  \centering
  \begin{subfigure}[b]{0.48\linewidth}
    \centering\includegraphics[width=0.8\linewidth]{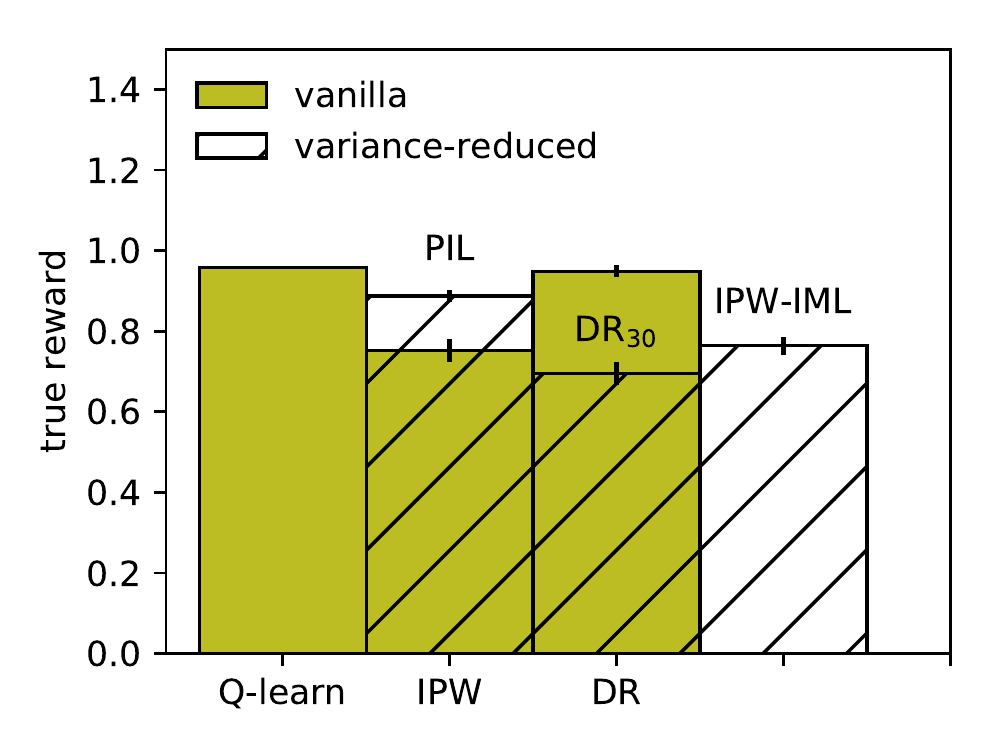}
    \caption{Variance reduction compared with vanilla methods.}
  \end{subfigure}\hspace{1.5em}%
  \begin{subfigure}[b]{0.48\linewidth}
    \centering\includegraphics[width=0.8\linewidth]{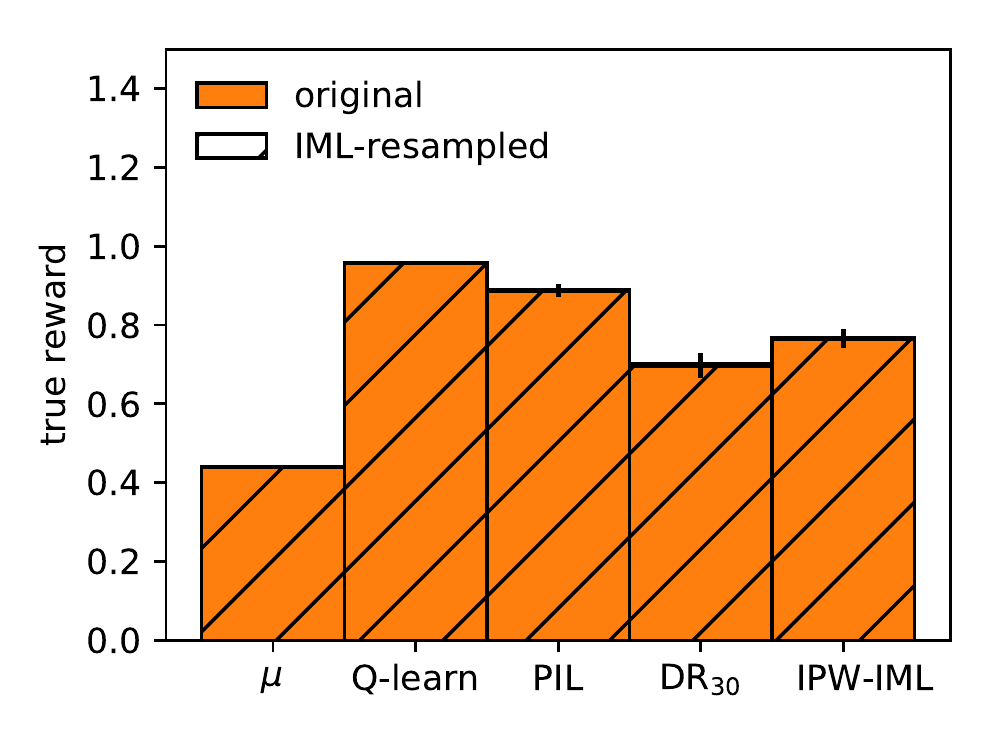}
    \caption{Online application of IML-resampling}
  \end{subfigure}\hspace{1.5em}%
  \caption{Multiclass-to-bandit conversion on UCI wdbc dataset.}
\end{figure*}

\begin{figure*}[ht]
  \centering
  \begin{subfigure}[b]{0.48\linewidth}
    \centering\includegraphics[width=0.8\linewidth]{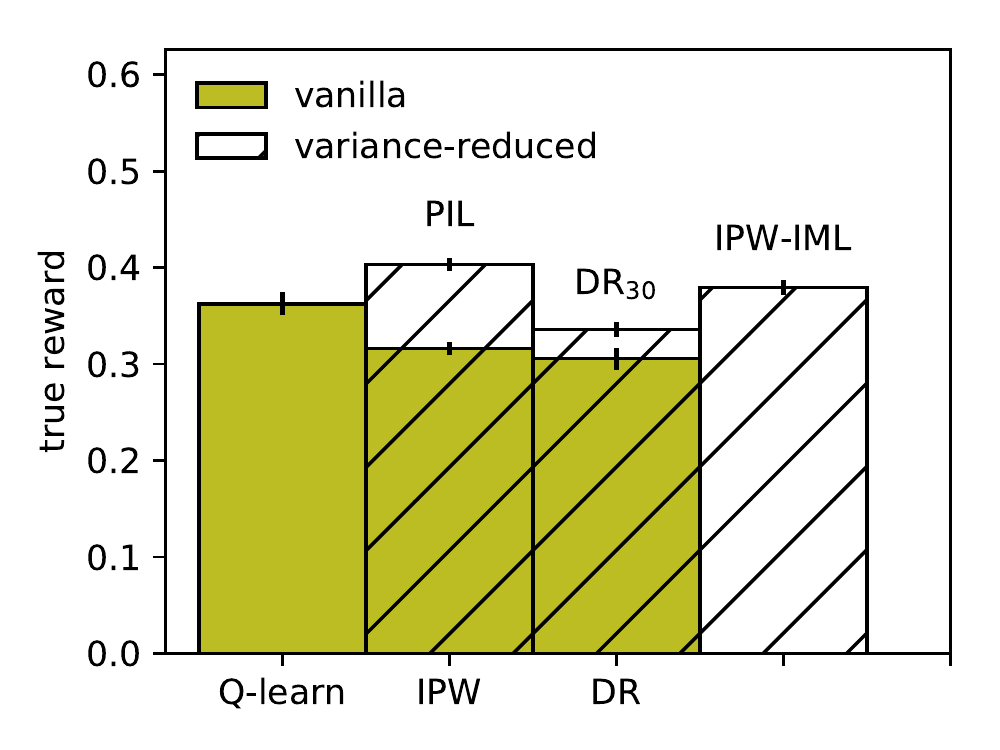}
    \caption{Variance reduction compared with vanilla methods.}
  \end{subfigure}\hspace{1.5em}%
  \begin{subfigure}[b]{0.48\linewidth}
    \centering\includegraphics[width=0.8\linewidth]{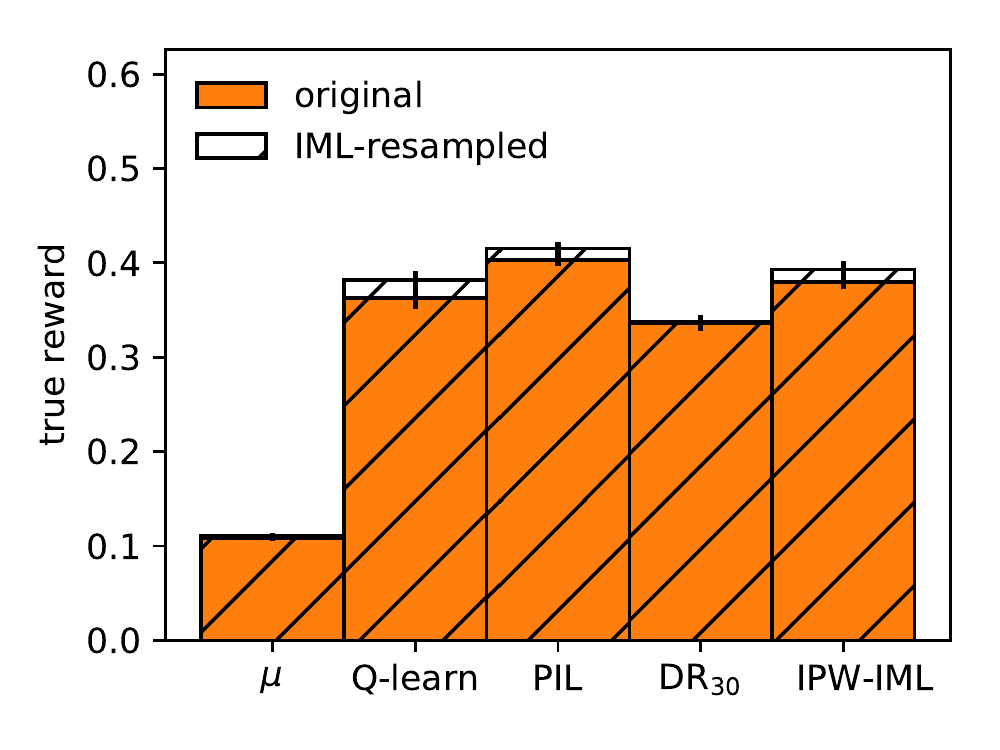}
    \caption{Online application of IML-resampling}
  \end{subfigure}\hspace{1.5em}%
  \caption{Multiclass-to-bandit conversion on UCI yeast dataset.}
\end{figure*}

\end{document}